%% file: main.tex
\documentclass{article}

\usepackage{collas2026_conference,times}

\usepackage{microtype}
\usepackage{graphicx}
\usepackage{booktabs} %

\usepackage{nicefrac}

\usepackage{algorithmic}
\usepackage{algorithm}

\input{math_commands.tex}

\usepackage{amssymb}
\usepackage{amsthm}
\usepackage{tikz-cd}
\usepackage{thmtools,thm-restate}
\usepackage{mathtools}

\usepackage{thm-restate}

\definecolor{ourMethod}{HTML}{0173b2}
\definecolor{LayerNorm}{HTML}{029e73}
\definecolor{Resetting}{HTML}{de8f05}
\definecolor{NaP}{HTML}{cc78bc}
\definecolor{spectral}{HTML}{ca9161}

\definecolor{snp}{HTML}{d55e00}
\definecolor{das}{HTML}{56b4e9}

\usepackage{xcolor}
\usepackage{subcaption}
\usepackage{caption}

\PassOptionsToPackage{hyphens}{url}
\usepackage{xurl}
\usepackage[
colorlinks,
citecolor=cyan,
linkcolor=blue,
urlcolor={blue!80!black}
]{hyperref}

\usepackage[capitalize,noabbrev]{cleveref}

\crefname{appendix}{appendix}{appendices}
\Crefname{appendix}{Appendix}{Appendices}
\crefname{assumption}{Assumption}{Assumptions}
\Crefname{assumption}{Assumption}{Assumptions}

\AddToHook{cmd/appendix/before}{\crefalias{section}{appendix}}

\Crefformat{section}{Section~#2#1#3}

\usepackage{xspace}

\newcommand{\tyler}[1]{\textcolor{blue}{\textbf{[Tyler: }#1\textbf{]}}}

\usepackage{amsthm}
\theoremstyle{plain}
\newtheorem{theorem}{Theorem}[section]
\newtheorem{proposition}[theorem]{Proposition}
\newtheorem{lemma}[theorem]{Lemma}
\newtheorem{corollary}[theorem]{Corollary}
\theoremstyle{definition}
\newtheorem{definition}[theorem]{Definition}
\newtheorem{assumption}{Assumption}[section]

\usepackage[framemethod=TikZ]{mdframed}
\usepackage{needspace}
\mdfdefinestyle{boxstyle}{%
 backgroundcolor=blue!6,
 roundcorner=8pt,
 innertopmargin=8pt,
 innerleftmargin=10pt,
 innerbottommargin=8pt,
 linewidth=0pt
}

\newcommand{\spectralclip}{SingularClip\xspace}

\newcommand*\samethanks[1][\value{footnote}]{\footnotemark[#1]}

\newif\ifincludeRL
\includeRLfalse

\title{SingularClip: Preventing Spectral Collapse to Maintain Plasticity in Continual and Reinforcement Learning}

\author{Tyler Kastner\thanks{Equal contribution. Correspondence to: \href{mailto:tkastner@cs.toronto.edu}{tkastner@cs.toronto.edu}.} \\
University of Toronto, Vector Institute, Mila \\
\And 
Nimrod De La Vega\samethanks \\
Polytechnique Montréal, Mila \\
\And 
Amir-massoud Farahmand \\
Polytechnique Montréal, Mila
}

\collasfinalcopy %

\begin{document}

\maketitle

\begin{abstract}
Neural networks trained on nonstationary tasks frequently lose the ability to fit new targets, a phenomenon referred to as loss of plasticity. We identify a novel source of plasticity loss due to the growing anisotropy of weight matrices' singular values during training, and analyze this phenomenon both empirically and theoretically. To mitigate this issue, we introduce \spectralclip, a procedure that periodically clips the singular values of all weight matrices. We show that \spectralclip performs strongly against baselines across a range of tasks in both continual supervised learning and deep reinforcement learning. 
\end{abstract}

\section{Introduction}
While deep learning excels at fitting static datasets, the ability to adapt rapidly in nonstationary environments remains a significant challenge. 
Networks trained on nonstationary data often become rigid, eventually losing the \emph{plasticity} required to fit new targets over time. This occurs in a range of settings, from learning new tasks in Continual Learning (CL) to tracking changing policies and value functions in Reinforcement Learning (RL). 

Previous work has identified various causes for the loss of plasticity, as well as techniques to prevent it.
Many of these can be interpreted as controlling the largest singular value of weight matrices, either implicitly by regularizing a choice of weight matrix norm, or directly regularizing the largest singular value \citep{yoshida2017spectral, bjorck2021towards, lewandowski2025learning}. At the same time, other works have attempted to preserve the plasticity properties present at network initialization, by regularizing towards the initial parameters \citep{kumar2023maintaining}, regularizing towards an orthogonal initialization \citep{chung2024parseval}, or periodically re-initializing the networks \citep{nikishin2022primacy}.

In this work, we identify a novel mechanism of plasticity loss, which emerges due to the increasing anisotropy of weight matrices' singular values. We term this \emph{anisotropy-induced plasticity loss} (AIPL). While previous work has identified that related quantities such as effective rank correlate with plasticity loss \citep{lyle2023understanding, kumar2023maintaining}, to our knowledge we are the first work to identify this anisotropy as a causal mechanism.
We show that this occurs empirically in both CL and RL settings, as well as in simplified theoretical settings. We then discuss how this leads to a loss of plasticity due to degraded gradient propagation, and provide theoretical lower bounds which quantify the slowdown of learning. Naturally, methods which only control the largest singular value do not prevent AIPL, as bounding the largest singular value allows for unbounded condition number growth, and we corroborate this finding empirically.

To address AIPL, we introduce \spectralclip, a method in which we periodically clip the singular values of all weight matrices. We demonstrate that this can be interpreted as a way to prevent AIPL while approximately maximizing the amount of information retained in the network.
When comparing to existing techniques for plasticity loss, we find that any method only regularizing the largest singular value does not prevent AIPL, as the anisotropy can still grow unbounded. Conversely, we show that methods such as resetting \citep{nikishin2022primacy}, which have become ubiquitous in reinforcement learning \citep{schwarzer2023bigger, kim2023sample, nauman2024bigger, lee2025simba}, are overly harsh and lose more information than needed, resulting in suboptimal learning and worsened sample complexity.

The organization of the paper is as follows. In \Cref{sec:bacgroundPlasticity}, we review previously introduced causes for loss of plasticity, and techniques introduced to mitigate them. In \Cref{sec:aipl}, we introduce AIPL as a cause of plasticity loss, and provide a description of what causes it to occur. In \Cref{sec:aipl-theory}, we provide a theoretical analysis of AIPL and obtain rates for the slowdown in plasticity incurred. In \Cref{sec:singularclip}, we introduce \spectralclip, and describe its implementation details, evaluating \spectralclip against baselines across continual supervised learning and deep reinforcement learning settings in \Cref{sec:empirical}, and lastly in \Cref{sec:related}, we discuss related work which may be relevant to the reader.

\section{Mechanisms of Plasticity Loss}\label{sec:bacgroundPlasticity}
Plasticity loss can arise from multiple seemingly unrelated factors \citep{lyle2024disentangling}. We highlight three mechanisms that have recently been characterized, along with mitigations introduced to address them.

\textbf{Dormant neurons.} Dormant neurons are activated for almost no inputs \citep{sokar2023dormant}. Because gradients vanish on these units, their weights stagnate, shrinking the effective capacity of the network. The proportion of dormant neurons has been observed to grow throughout training in nonstationary settings, prompting interventions which reset inactive units \citep{sokar2023dormant, liu2025measure}, or introduce normalization layers prior to nonlinearities to maintain activation \citep{lyle2024normalization}.

\textbf{Out-of-date optimizer statistics.} Adaptive optimizers such as Adam \citep{kingma2015adam} maintain exponential moving averages of gradient moments. In continual learning, gradients accumulated on earlier tasks may become misaligned, or even contradictory, to the current task, degrading adaptation. The proposed remedies include first-order momentum corrections \citep{bengio2021correcting}, carefully tuned hyperparameters that limit carry-over between tasks \citep{lyle2023understanding,dohare2023overcoming}, and periodic partial resets of the optimizer state \citep{asadi2023resetting,ellis2024adam}.

\textbf{Effective learning rate decay.} The presence of normalization layers causes parts of neural networks to become scale-invariant: $f(x; \theta)=f(x; c\theta)$ for any $c>0$. Consequently, gradients scale inversely with parameter norms as $\nabla f(x; c\theta)=\frac{1}{c}\,\nabla f(x; \theta)$, motivating the definition of an \emph{effective learning rate} $\alpha_{\mathrm{eff}}=\alpha/\|\theta\|_{F}$ \citep{van2017l2}. With non-infinitesimal steps, parameter norms grow during training \citep{van2017l2}, forcing $\alpha_{\mathrm{eff}}$ to decay and eventually stall learning. Mitigations include weight decay, $L_2$ regularization towards initialization \citep{kumar2023maintaining}, and periodic projection of the parameters onto a Frobenius-norm ball \citep{lyle2024normalization}.

\section{Anisotropy-Induced Plasticity Loss}\label{sec:aipl}

Empirically, it has been observed that gradients in deep learning often reside in low-dimensional subspaces of the parameter space \citep{gur2018gradient, larsen2022how, sonthalia2025low}. Under gradient descent-based training strategies, this causes weight matrices to concentrate their mass along these dimensions. From a spectral perspective, these matrices develop skewed singular value distributions, and high condition numbers. 
We now argue that this is typically not problematic in stationary settings if future gradient directions remain relatively stationary; however, it can cause plasticity loss in continual settings.

\begin{figure}[t]
 \centering
 \includegraphics[width=\linewidth]{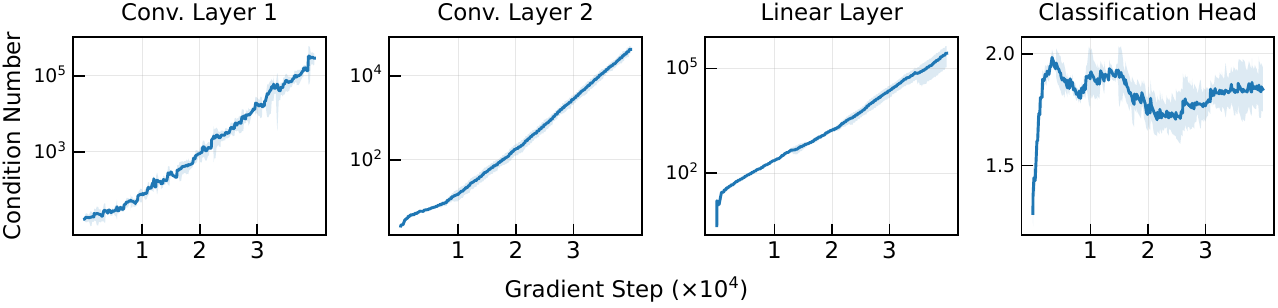}
 \caption{Evolution of the condition numbers for each layer in a network training on Random Label CIFAR task. Error bars show 95\% bootstrapped confidence intervals across 10 runs.}
 \label{fig:cifar-layer-conds}
\end{figure}

Before discussing how this leads to plasticity loss, we first empirically quantify the extent to which condition number growth occurs in standard tasks. In \Cref{fig:cifar-layer-conds}, we plot the condition numbers of each layer in a convolutional network trained on Random Label CIFAR (a description of this task can be found in \Cref{sec:empirical}). We find that the trend of all layers except the final classification head experiencing a roughly exponential growth of anisotropy occurs across architectures and tasks; we hypothesize that the lack of anisotropy in the classification head may be due to the softmax bottleneck phenomenon \citep{yang2018breaking, NEURIPS2018_9dcb88e0}. In \Cref{fig:esd-plot}, we plot the singular values of the first convolutional layer of a network trained on the Random Label CIFAR task over time; this allows one to visualize the evolution of all condition numbers instead of just the range shown by condition numbers.

\begin{figure}
 \centering
 \includegraphics[width=\linewidth]{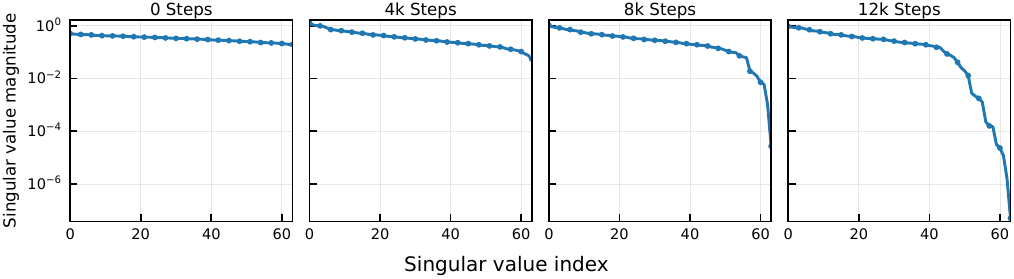}
 \caption{Evolution of the singular value distribution for the first convolutional layer in a run of Random Label CIFAR. Note that the largest singular value increases by a factor of roughly 3, as the parameter norm growth is controlled by weight decay.}
 \label{fig:esd-plot}
\end{figure}

With this empirical evidence that anisotropy grows throughout training, we now describe a mechanism for which this leads to a loss of plasticity. As training progresses, weight matrices will become anisotropic, and concentrate their mass in a certain \emph{dominant subspace}. In nonstationary settings, gradients for future tasks may have significant components outside this subspace (the intuition for this is as follows: the dominant subspace was formed by earlier gradient vectors, which may be unrelated to the current task's gradients). When this occurs, we argue that learning along these directions is slowed due to the imbalance in singular values. We name this phenomenon \emph{anisotropy-induced plasticity loss} (AIPL).

To illustrate why this slowdown occurs, consider a single linear layer in an otherwise arbitrary neural network with weight matrix $W \in \mathbb{R}^{m\times n}$. We write its singular value decomposition as $W =U \Sigma V^\top$, where $\Sigma = \mathrm{diag}(\sigma_1,\dots,\sigma_d)$. From our earlier observations, we will assume that $(\sigma_1,\dots,\sigma_d)$ exhibits anisotropy (so that $\sigma_1\gg \sigma_d$). 
Suppose the activation gradient of this layer's output is given by $g \in \mathbb{R}^m$. Let us write $ (\alpha_1,\dots, \alpha_m)$ for the coordinates of $g$ in the basis of the columns of $U$. The gradient signal passed downstream to all preceding layers in the network is given by $W^\top g \in \mathbb{R}^n$, which can be written as $\sum_{i}\alpha_i \sigma_iv_i$. If the current gradients are not aligned with the dominant subspace, meaning $\alpha_i$ is not concentrated in the dimensions of the dominating singular values $\sigma_i$, the downstream gradient passed to earlier layers is attenuated due to the skewness. This then leads to a loss of plasticity, as the attenuation diminishes the learning signal received by the network to adapt to the current task. This computation also illustrates why the phenomenon is benign in stationary settings: in cases where the current gradient is aligned with those previously encountered (and hence with the current weight matrix), the gradients are propagated without reduction.

Having established a process through which anisotropy leads to plasticity loss, we now empirically analyze this phenomenon. Following the argument in the previous paragraph, we note that anisotropy has two components: the amount of magnitude difference between singular values, which affects the magnitude of gradient attenuation, and the number of small singular values, which dictates the number of update directions which are attenuated. We begin with a simplified stationary MNIST setting, where we modify the anisotropy of network initialization to isolate its effect on learning time. Specifically, for a parameter $d$, we initialize all weight matrices as $U\Sigma_d V^\top$, where $U$ and $V$ are randomly sampled orthogonal matrices, the top $d$ singular values of $\Sigma_d$ are set to 1, and the remaining are sampled at scale $1\times 10^{-5}$. We rescale each input matrix to constant Frobenius norm to eliminate initialization norm scale as a confounder. For each $d$, we measure the number of steps needed to achieve perfect train accuracy on a subset of MNIST, and plot the results in \Cref{fig:cond-causal-plots}. We can see that the number of steps required experiences an inverse monotonic relationship with $d$, demonstrating a clear effect of anisotropy on plasticity (note that a smaller $d$ corresponds to a larger degree of anisotropy). We next consider the performance of a network on the Random Label MNIST setting (further details can be found in \Cref{sec:empirical}). This setting is separated into a sequence of discrete tasks, where each task is an independent learning problem. We track the average train accuracy across each task as a measure of plasticity, as this directly captures the network's ability to accurately fit the data it is observing, as well as how quickly it can fit the data. We plot the initial condition number against the average accuracy obtained in \Cref{fig:cond-causal-plots}, which demonstrates a clear negative relationship between the condition number and the plasticity of the network.

\begin{figure}[h]
 \centering
 \begin{subfigure}[t]{0.45\linewidth}
 \centering
 \includegraphics[width=\linewidth]{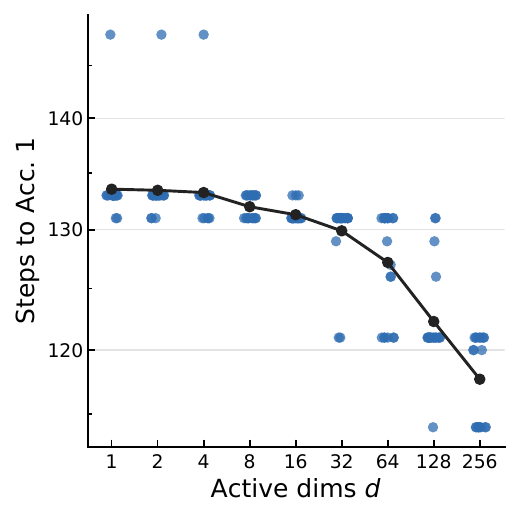}
 \end{subfigure}
 \begin{subfigure}[t]{0.45\linewidth}
 \centering
 \includegraphics[width=\linewidth]{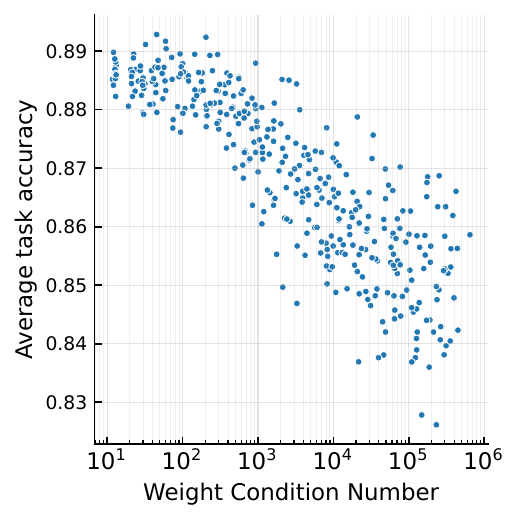}
 \end{subfigure}
 \caption{Measuring the number of steps to reach perfect train accuracy as a function of initialization anisotropy, across 20 independent runs (left). Measuring initial weight condition number against average task accuracy in the Random Label MNIST task; datapoints are collected in a sequence of 35 tasks, across 10 independent runs (right).}
 \label{fig:cond-causal-plots}
\end{figure}

\section{Theoretical Analysis of AIPL}\label{sec:aipl-theory}

We now study AIPL in a simplified theoretical setting, where its effect can be isolated from the many confounding factors found in deep learning in practice.
 
We consider a two-layer linear network $f_W(x)=W_2W_1x$, whose goal is to minimize the cross-entropy loss to a teacher model, where $W_1 \in \mathbb{R}^{r \times d}$, $W_2 \in \mathbb{R}^{K \times r}$. This is a common setup for the theoretical analysis of neural network phenomena \citep{saxe2014exact, kawaguchi2016, soudry2018, arora2018optimization, lyu2024dichotomy}. We consider the evolution of the weights under the gradient flow of the cross-entropy loss, projected to remain on the hypersphere: $\|W_1(t)\|_F=\|W_2(t)\|_F=1$. 

There are multiple benefits of considering these projected dynamics as opposed to the standard gradient flow. Firstly, it corresponds to the gradient flow obtained when activation normalization is present after each layer (the dynamics are unchanged for any normalization which induces scale-invariance, such as batch normalization \citep{ioffe2015batch}, weight normalization \citep{salimans2016weight}, layer normalization \citep{ba2016layer}, RMS normalization \citep{zhang2019root}), and as discussed in \Cref{sec:bacgroundPlasticity}, normalization layers are an important plasticity loss mitigation. Secondly, by maintaining constant Frobenius norm dynamics, we remove any plasticity loss arising from parameter norm growth, as also discussed in \Cref{sec:bacgroundPlasticity}. Thirdly, any parameter change in the radial direction does not affect the anisotropy of the parameters, by ignoring such changes, we isolate the effect of anisotropy on plasticity dynamics.

Our goal is to quantify the time $T$ to solve the network's current task. We do this assuming a fixed, possibly anisotropic initialization, with the aim that the slowdown in task completion time is proportional to the amount of anisotropy. Hence, this is the theoretical setting corresponding to the Random Label MNIST empirical analysis done in \Cref{fig:cond-causal-plots}. We begin with introducing a measure of mass assigned by the network $f_W$ to hidden subspaces.

\begin{definition}
 Let $\mathcal U\subseteq\mathbb R^r$ be a subspace, and let $P_{\mathcal U} \in \mathbb{R}^{r \times r}$ be the orthogonal projector onto $\mathcal U$. For parameters $W=(W_1, W_2)$, we define
\[
M_{\mathcal U}(W) := \|P_{\mathcal U}W_1\|_F^2 + \|W_2P_{\mathcal U}\|_F^2.
\]
\end{definition}

 Intuitively, the first term measures the amount of mass of $W_1$ whose output lies in $\mathcal{U}$, and the second term measures the amount of mass of $W_2$ which reads input from $\mathcal U$. 
 We next introduce the following quantities, which allow us to upper-bound $\mathcal{M}_\mathcal{U}(W)$ at initialization:
\[\sigma_{1, \mathcal{U}} = \|P_{\mathcal U}W_1(0)\|_*,\quad n_1=\mathrm{rank}(P_{\mathcal U}W_1(0)), \quad \sigma_{2, \mathcal{U}} = \|W_2(0) P_{\mathcal U}\|_*,\quad n_2=\mathrm{rank}(W_2(0)P_{\mathcal U}). \]
Intuitively, note that if $\mathcal{U}$ was exactly spanned by left singular vectors of $W_1$, then $n_1$ would be the number of singular vectors, and $\sigma_{1,\mathcal U}$ would be the largest corresponding singular value of this set (likewise for right singular vectors of $W_2$). With these definitions in place, we present our main theoretical result.

\begin{restatable}{theorem}{main}
\label{thm:time-lb-informal}
Suppose there exists $m>0$ such that $f_W$ solving the current task requires that $M_\mathcal{U}(W) \ge m$, and at initialization $\mathcal{M}_\mathcal{U}(W(0)) < m$. If the weights $W(t)$ are updated using projected gradient flow, let $T$ be the first time which $W(T)$ solves the current task. Then we have
\[
T \gtrsim \log\!\left(\frac{m}{n_1\sigma_{1,\mathcal{U}} + n_2\sigma_{2,\mathcal{U}}}\right).
\]
In particular, if $\mathcal{U}$ perfectly aligns with the smallest singular subspace of each weight matrix, then
\[
T \gtrsim \log(m \cdot\min(\kappa_1, \kappa_2)),
\]
where $\kappa_1$ and $\kappa_2$ are the condition numbers of $W_1$ and $W_2$ respectively.
\end{restatable}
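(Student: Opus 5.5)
The plan is a Grönwall-type argument on the scalar $M(t):=M_{\mathcal U}(W(t))$. We show that under projected gradient flow $M$ can grow at most exponentially, $\dot M \le C\,M$, with $C$ independent of the initialization. Then $M(t)\le M(0)e^{Ct}$. At the solving time $T$ we need $M(T)\ge m$, which forces $T\ge C^{-1}\log(m/M(0))$. The final step is to bound $M(0)$ by $n_1\sigma_{1,\mathcal U}+n_2\sigma_{2,\mathcal U}$.

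\textbf{Step 1: the projected flow.} Write $G_1=\nabla_{W_1}L = W_2^\top E x^\top$ averaged over data, and $G_2=E(W_1x)^\top$, where $E$ is the softmax residual. On the unit sphere the projected flow is
\[
\dot W_i=-(G_i-\langle G_i,W_i\rangle W_i).
\]
Differentiating the first term of $M$ gives
\[
\tfrac{d}{dt}\|P_{\mathcal U}W_1\|_F^2=-2\langle P_{\mathcal U}W_1,G_1\rangle+2\langle G_1,W_1\rangle\|P_{\mathcal U}W_1\|_F^2 .
\]
The key observation is that $P_{\mathcal U}G_1=(W_2P_{\mathcal U})^\top E x^\top$. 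Hence
\[
|\langle P_{\mathcal U}W_1,G_1\rangle|\le \|W_2P_{\mathcal U}\|_F\,\|P_{\mathcal U}W_1\|_F\,B,
\]
where $B$ bounds $\|E\|\|x\|$. The softmax residual is bounded by $\sqrt2$, and we assume bounded data. By AM–GM this term is at most $B\,M$.

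The radial term is bounded using $|\langle G_1,W_1\rangle|\le\|W_2\|\|W_1\|B\le B$, since $\|W_i\|_F=1$. The second term of $M$ is handled symmetrically: $G_2P_{\mathcal U}=E(P_{\mathcal U}W_1x)^\top$. Together these give $\dot M\le C M$ with $C=O(B)$.

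\textbf{Step 2: initialization bound.} Use $\|A\|_F^2\le \operatorname{rank}(A)\,\|A\|_2^2$. Here $\|A\|_2\le\|A\|_*$ is the quantity the paper denotes $\sigma_{\cdot,\mathcal U}$; the paper's description of it as the "largest singular value" suggests the operator norm is intended. Since $\|W_i\|_F=1$ implies $\sigma\le1$, we have $\sigma^2\le\sigma$, and hence
\[
M(0)\le n_1\sigma_{1,\mathcal U}+n_2\sigma_{2,\mathcal U}.
\]
Substituting this into Grönwall gives the first display, with the constant $C^{-1}$ absorbed into $\gtrsim$.

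\textbf{Specialization.} Suppose $\mathcal U$ is spanned by the bottom singular directions of each $W_i$. Then $\sigma_{i,\mathcal U}=\sigma_{\min}(W_i)=\sigma_{\max}(W_i)/\kappa_i$. Unit Frobenius norm gives $\sigma_{\max}(W_i)\le 1$, so $\sigma_{i,\mathcal U}\le 1/\kappa_i$. The denominator is then at most $(n_1+n_2)/\min(\kappa_1,\kappa_2)$. Treating the dimensions $n_i$ as constants, this yields $T\gtrsim\log(m\min(\kappa_1,\kappa_2))$.

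The main obstacle is Step 1. We must ensure the growth rate $C$ does not depend on the anisotropy, because any dependence would swamp the log bound. The crucial fact is that the gradient feeding mass into $\mathcal U$ is itself proportional to the mass already in $\mathcal U$ in the other layer, which is the linear cross-coupling in a two-layer linear network. The radial correction is also harmless, because Frobenius normalization bounds it uniformly. If the rank/nuclear-norm bookkeeping in Step 2 does not match the stated constants exactly, the fallback is to state the bound with $\|\cdot\|_F^2$ directly and then relax it.
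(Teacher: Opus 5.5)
Your proposal is correct and follows the paper's proof essentially step for step. Like the paper, you use the cross-coupling identity $P_{\mathcal{U}}G_1=\mathbb{E}[(W_2P_{\mathcal{U}})^\top\Delta p\,x^\top]$ and its mirror for $G_2$, bound the radial correction uniformly via unit Frobenius norm, apply Gr\"onwall to get $M_{\mathcal{U}}(t)\le M_{\mathcal{U}}(0)e^{4At}$, and finish with the initialization and condition-number bounds.

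Two small remarks. First, you do not need bounded data: Jensen gives the constant $A=\sqrt{2}\,\mathbb{E}\|x\|$ under the paper's moment assumption. Second, your bound $\|P_{\mathcal{U}}W_1(0)\|_F^2\le n_1\|P_{\mathcal{U}}W_1(0)\|_{op}^2$ is more general than the paper's, which assumes $\mathcal{U}$ is spanned by singular vectors of both $W_1(0)$ and $W_2(0)$.
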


\Cref{thm:time-lb-informal} quantitatively characterizes the impact of weight anisotropy on network plasticity. In particular, it states that for a new task (where we kept the task itself and the condition for solving it arbitrary to preserve generality), the adaptation time grows logarithmically with the inverse of the largest singular values of the subspaces which must be populated, hence populating a subspace with low singular values will result in a logarithmic proportional to the degree of anisotropy. This can be interpreted in the empirical results of \Cref{sec:aipl}: \Cref{fig:esd-plot} demonstrates the emergence of directions with small singular values, as well as the increase in the number of such directions. The condition number plots in \Cref{fig:cifar-layer-conds} further demonstrate the worst-case amount of slowdown, when the adaptation requires populating the smallest singular subspace. This result allows us to establish a causal relationship, albeit in a simplified theoretical setting.

\ifincludeRL
To capture the difficulty of adapting a pre-trained agent in RL, we model a scenario where the agent has previously converged to a low-rank, anisotropic state ($E_0 \approx 0$). This mirrors empirical observations where agents overfit early data, causing spectral collapse that persists even when new data becomes available \citep{nikishin2022primacy}. Our analysis focuses on the dynamics of ``recovery'' from this state: the agent must learn a new task (recover the weak subspace) using gradients derived from a policy and target network that remain trapped in the dominant subspace.

We now analyze the reinforcement learning setting: we maintain the two layer linear network with a softmax output, and consider a version of softmax temporal-difference learning (details in \Cref{app:rl}). Importantly, this captures both bootstrapping and semi-gradient learning dynamics, two vital aspects of RL \citep{sutton2018reinforcement}.

Unlike the supervised learning setting, we cannot measure the time to improve the loss by a certain amount, as RL dynamics do not minimize a proper loss function (\tyler{citations}). Instead, we consider the amount of time it takes to recover a weak subspace from squared norm $E_0$ to a higher value $\eta$. This is a useful proxy, as this is a precursor to feature learning in these subspaces.

\begin{theorem}[RL Lower Bound (informal)]\label{thm:rl-time}
The time $T^*_{\mathrm{RL}}$ required to recover from an initial weak energy $E_0$ to a level $\eta$ is lower-bounded as
\[
T^*_{\mathrm{RL}} \gtrsim %
\left( \frac{1}{E_0} - \frac{1}{\eta} \right).
\]
\end{theorem}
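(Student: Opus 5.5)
The plan is to reduce the statement to a scalar differential inequality for the weak-subspace energy and integrate it. Let $P$ be the orthogonal projector onto the weak hidden subspace $\mathcal U$. Define the energy $E(t):=M_{\mathcal U}(W(t))=\|PW_1(t)\|_F^2+\|W_2(t)P\|_F^2$, with $E(0)=E_0$. The target inequality is
\[
\frac{d}{dt}E(t)\;\le\;C\,E(t)^2,
\]
with $C$ depending only on bounded quantities: the input norm bound $\sup\|x\|$, the discount, the softmax temperature, and the unit Frobenius norms. Given this, $\frac{d}{dt}\bigl(-1/E\bigr)\le C$. Integrating from $0$ to the first time $T^*_{\mathrm{RL}}$ at which $E=\eta$ gives $1/E_0-1/\eta\le C\,T^*_{\mathrm{RL}}$, which is the claim. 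A quadratic rather than linear growth rate is exactly what turns the logarithm of \Cref{thm:time-lb-informal} into a $1/E_0$ dependence.

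\textbf{Step 1: compute the semi-gradient.} I would write the semi-gradient field of the softmax TD objective from \Cref{app:rl}. Because the target is not differentiated, the update of $W_1$ has the form $\mathbb E[W_2^\top \delta\, x^\top]$ and the update of $W_2$ has the form $\mathbb E[\delta\,(W_1x)^\top]$. Here $\delta\in\mathbb R^K$ is the (softmax-weighted) TD error vector. Projecting, the weak components evolve as
\[
\tfrac{d}{dt}PW_1=-P W_2^\top\mathbb E[\delta x^\top]+\text{radial term},\qquad
\tfrac{d}{dt}W_2P=-\mathbb E[\delta x^\top]W_1^\top P+\text{radial term}.
\]
The radial terms come from the projection onto the sphere. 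They have the form $-\lambda_i(t)W_i$ with $\lambda_i=\langle W_i,\nabla_i\rangle$, and they contribute $-2\lambda_i\|PW_1\|^2$ (respectively $-2\lambda_i\|W_2P\|^2$). These are bounded in absolute value by $C'E$, and I would either absorb them into a linear term or argue they are nonpositive under the recovery regime. If a linear term $C'E$ survives, I would note that $E\le E^2/E_0$ on the relevant interval, so it can still be absorbed up to constants.

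\textbf{Step 2: bound the weak coupling of the TD error.} The modeling assumption is that the policy and the target network remain in the dominant subspace. So the TD error seen by the weak directions has no $E$-independent driving term. Decompose $f_W(x)=W_2(I-P)W_1x+W_2PW_1x$; the target only depends on the first part. Then I would show that the component of $\delta$ that pairs with the weak blocks is Lipschitz in the weak contribution $W_2PW_1x$. This gives $\|\delta_{\mathrm{weak}}\|\le L\,\|W_2P\|_F\|PW_1\|_F\|x\|\le L\|x\|E$, and is where softmax Lipschitzness and the frozen dominant-subspace target are used.

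\textbf{Step 3: assemble.} By Cauchy--Schwarz,
\[
\frac{dE}{dt}\le 2\|PW_1\|_F\|W_2P\|_F\,\|\delta_{\mathrm{weak}}\|\,\|x\|\cdot 2\le C E\cdot E,
\]
using $\|PW_1\|\|W_2P\|\le E/2$. Then integrate as described above.

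The main obstacle is Step 2: justifying rigorously that no order-one part of $\delta$ leaks into the weak subspace. That requires stating precisely the invariance assumption on the target and policy. I would first try to prove that the dominant-subspace component of the TD fixed point is stationary, so that any residual error in the dominant subspace is orthogonal (after projection by $P$) to the weak blocks. If that fails, I would make it an explicit assumption of the formal theorem in \Cref{app:rl}.
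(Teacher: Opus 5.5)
The paper as given contains no proof of this statement: it sits inside a conditional block that is switched off, and the appendix it points to is absent. The only comparison available is the supervised argument, where \Cref{thm:weak-growth} gives $\dot M_{\mathcal U}\le 4AM_{\mathcal U}$ and Gr\"onwall gives the logarithm. Your plan, upgrading this to $\dot E\le CE^2$ and integrating $1/E$, is the natural analogue, and the integration step is correct. The problem is that the terms you propose to bound are in general linear in $E$, and the argument has no room for a linear term.

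With your Step 1 form of the semi-gradient and $\|W_1\|_F=\|W_2\|_F=1$, one checks $\langle G_1,W_1\rangle_F=\langle G_2,W_2\rangle_F=\alpha:=\mathbb E[\delta^\top W_2W_1x]$. The exact identity is
\[
\dot E=-4\,\mathbb E\bigl[\delta^\top W_2PW_1x\bigr]+2\alpha E .
\]
The radial coefficient $\alpha$ is an order-one quantity with no fixed sign. Your fallback of absorbing $C'E$ via $E\le E^2/E_0$ gives $\dot E\le (C+C'/E_0)E^2$. That yields only $T\ge (1-E_0/\eta)/(CE_0+C')$, which is $O(1)$ as $E_0\to 0$. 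More fundamentally, the comparison equation $\dot y=C'y+Cy^2$ reaches $\eta$ from $E_0$ in time $\frac{1}{C'}\log\frac{\eta(C'+CE_0)}{E_0(C'+C\eta)}=\Theta(\log(1/E_0))$. So no argument that tolerates a linear term can give the $1/E_0$ rate; you must show or assume $\alpha\le O(E)$.

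The same issue sits in Step 2, and it is a substantive gap, not a matter of rigor. Lipschitzness of the softmax with a frozen target gives $\delta=\delta_0+\Delta$ with $\|\Delta\|\le L\|W_2PW_1x\|\le L\|x\|E/2$. Here $\delta_0$ is the TD error of the network with the $\mathcal U$-pathway removed. The $\Delta$ part does contribute only $O(E^2\,\mathbb E\|x\|^2)$. But $\delta_0$ is order one, and it contributes $-4\langle M_0,W_2PW_1\rangle_F=-4\langle PW_2^\top M_0,PW_1\rangle_F$ with $M_0=\mathbb E[\delta_0x^\top]$, which is generically of order $E$. Your bound $\|\delta_{\mathrm{weak}}\|\le L\|x\|E$ silently drops this term.

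Your proposed repair does not work. Stationarity of the dominant component constrains $(I-P)W_2^\top M_0$ and $M_0W_1^\top(I-P)$. The offending term involves $PW_2^\top M_0$, equivalently $M_0W_1^\top P=\mathbb E[\delta_0(PW_1x)^\top]$, which stationarity leaves free. Without a hypothesis killing it, one should not expect the claim. With the bootstrap term switched off the update is of supervised type, where this term drives the usual exponential saddle escape in time $\Theta(\log(1/E_0))$.

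So the theorem needs an explicit hypothesis, for instance $\mathbb E[\delta_0(PW_1x)^\top]=0$ together with control of $\alpha$. It must hold along the whole trajectory, not just at $t=0$, since the weights, the dominant part and the visitation distribution all move. That ``mute teacher'' assumption carries essentially all of the content; once it is granted, the rest is your two-line integration.
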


Similarly to \Cref{thm:time-lb-informal}, this result proves that plasticity loss is mechanistically driven by weight anisotropy. In the RL setting, however, we observe a significantly more severe slowdown: adaptation time grows polynomially with the inverse weak energy ($T^*_{RL} \sim 1/E_0$), whereas supervised learning adaptation scales only logarithmically ($T^*_{SL} \sim \log(1/\sqrt{E_0})$). We attribute this disparity to the compounding effect of bootstrapping. The regression target is defined as $Y = r + \gamma Q(\cdot; W^-)$, where $W^-$ represents the frozen, anisotropic weights inherited from the initialization. Because $W^-$ is spectrally collapsed, the bootstrap component provides negligible signal in the weak subspace. Furthermore, because the reward data is collected by the anisotropic policy $\pi_{W^-}$, the state visitation distribution is biased toward the dominant subspace. This creates a compounding failure mode: the student is deaf to weak signals due to gradient attenuation, and the teacher is effectively mute due to target collapse.
\fi

\section{SingularClip: Maintaining Plasticity via Spectral Control}\label{sec:singularclip}

To combat the negative effects of AIPL, we propose a simple solution: periodically clipping the singular values of all weight matrices in the network to a predetermined range. By controlling the entire range of singular values, we prevent the anisotropy from growing without bound. Importantly, this is not the case for methods which only constrain the largest singular value: the anisotropy and condition number can still grow unbounded in this case. Resetting resolves this issue, but loses \emph{all} information learnt in the weight matrix. Alternatively, we will show that our form of spectral clipping is in some sense a \emph{minimal} intervention to achieve this objective.

Formally, the operation we will be applying to the weight matrices is
\[
W\mapsto \mathrm{sc}_a^b(W) \coloneq U\, \mathrm{clip}(\Sigma,a,b)\, V^{\top},
\]
where $W = U\Sigma V^\top$ is its singular value decomposition and $\mathrm{clip}(X,a,b)$ clips the entries of a matrix $X$ to be within $[a,b]$. We will further consider a single hyperparameter $c$, and perform singular clipping to the range $[\nicefrac{1}{c}, c]$. This in turn presents a simple intuition: after performing \spectralclip, the linear transformation performed by each weight matrix preserves all rotations, but can only expand or contract inputs by a factor of at most $c$.

\begin{algorithm}[tb]
 \caption{\spectralclip}
 \label{alg:themethod}
\begin{algorithmic}
 \STATE {\bfseries Input:} parameters $\theta$ of network; clip ratio $c \geq 1$; clipping period $K$ gradient steps.
 \FOR{training step $t = 1,2,\dots$}
 \STATE Perform optimization step on $\theta$
 \IF{$t \bmod K = 0$}
 \FOR{each linear and convolutional layer $W_\ell$ in $\theta$}
 \IF{$W_\ell$ is a convolutional layer}
 \STATE $W_\ell \gets \mathrm{reshape2D}(W_\ell)$
 \ENDIF
 \STATE Compute SVD: $W_\ell = U \Sigma V^\top$
 \STATE Clip singular values: $\Sigma' \gets \mathrm{clip}(\Sigma, \nicefrac1c, c)$
 \STATE Apply clipping: $W_\ell \gets U \Sigma' V^\top$
 \IF{$W_\ell$ is a convolutional layer}
 \STATE Restore shape: $W_\ell \gets \mathrm{unreshape}(W_\ell)$
 \ENDIF
 \ENDFOR
 \ENDIF
 \ENDFOR
\end{algorithmic}
\end{algorithm}

We now formalize a certain sense in which this intervention is minimal.
\begin{restatable}{proposition}{FrobeniusProjection}\label{prop:proj}
The matrix $\mathrm{sc}_a^b(W)$ is the Frobenius norm projection of $W$ onto the set of matrices with singular values in $[a,b]$. Equivalently, 
\[ \left\| \mathrm{sc}_a^b(W) - W \right\|_F \leq \left\| B - W \right\|_F \]
for any matrix $B$ with all singular values in $[a,b]$. (See \Cref{sec:singularclip-proofs} for the formal proof).
\end{restatable}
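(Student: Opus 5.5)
The plan is to reduce the matrix problem to a problem about singular values alone, using von Neumann's trace inequality. Fix $W\in\mathbb{R}^{m\times n}$ with SVD $W=U\Sigma V^\top$ and singular values $\sigma_1\ge\dots\ge\sigma_p$, where $p=\min(m,n)$. Let $B$ be any matrix with singular values $\beta_1\ge\dots\ge\beta_p$, all in $[a,b]$. Expanding the Frobenius norm gives
\[
\|B-W\|_F^2=\sum_i \beta_i^2+\sum_i\sigma_i^2-2\,\mathrm{tr}(B^\top W).
\]
Von Neumann's trace inequality states that $\mathrm{tr}(B^\top W)\le\sum_i\beta_i\sigma_i$, with both sequences sorted in the same (decreasing) order. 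Substituting this bound yields
\[
\|B-W\|_F^2\ \ge\ \sum_i(\beta_i-\sigma_i)^2.
\]
Equality holds when $B$ shares the singular vectors $U,V$ of $W$.

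Next I minimize the right-hand side over $\beta$. The constraint set is a box, $\beta_i\in[a,b]$ for each $i$, and the objective is separable across $i$. Each term $(\beta_i-\sigma_i)^2$ is therefore minimized independently by the scalar projection $\beta_i=\mathrm{clip}(\sigma_i,a,b)$. Since clipping is monotone, these values remain sorted decreasingly, so they are compatible with the ordering used in the von Neumann step. Dropping the ordering constraint only enlarges the feasible set, and the unconstrained minimizer already satisfies it, so the lower bound is valid. Hence
\[
\|B-W\|_F^2\ \ge\ \sum_i(\mathrm{clip}(\sigma_i,a,b)-\sigma_i)^2 .
\]

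Finally, $\mathrm{sc}_a^b(W)=U\,\mathrm{clip}(\Sigma,a,b)\,V^\top$ is an admissible $B$, since its singular values are the clipped values and all lie in $[a,b]$. By unitary invariance of the Frobenius norm,
\[
\|\mathrm{sc}_a^b(W)-W\|_F^2=\sum_i(\mathrm{clip}(\sigma_i,a,b)-\sigma_i)^2 .
\]
So $\mathrm{sc}_a^b(W)$ attains the lower bound, which proves the inequality and identifies $\mathrm{sc}_a^b(W)$ as a Frobenius-norm projection.

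The main obstacle is the von Neumann step: stating it correctly for rectangular matrices and making sure the two singular-value sequences are paired in matching order. I will cite the inequality rather than reprove it. Two boundary conventions also need care. First, "singular values in $[a,b]$" must be read as all $p=\min(m,n)$ singular values, counting zeros. Second, the definition of $\mathrm{clip}(\Sigma,a,b)$ must be read as clipping only the diagonal entries of $\Sigma$, not its off-diagonal zeros. I will state both conventions explicitly. The constraint set is not convex, so uniqueness of the projection is not claimed; only the inequality, which is what the proposition asserts, is proved.
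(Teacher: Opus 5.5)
Your proposal is correct and follows essentially the same route as the paper. Both expand $\|B-W\|_F^2$, apply von Neumann's trace inequality to reduce to the separable problem $\sum_i(\beta_i-\sigma_i)^2$ over $\beta_i\in[a,b]$, solve it coordinatewise by clipping, and note that $\mathrm{sc}_a^b(W)$ attains the bound (with uniqueness not claimed). Your explicit conventions, counting all $\min(m,n)$ singular values and clipping only the diagonal of $\Sigma$, make precise points the paper leaves implicit.
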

As a Frobenius projection, SingularClip minimizes an upper bound on feature distortion: for any vector $x$ and matrices $W, W'$, we have $\|Wx - W'x\|_2 \le \|W - W'\|_F \|x\|_2$. This provides a guarantee on the extent that \spectralclip perturbs the outputs of the layer.

\subsection{Algorithmic Details}

We now discuss some details regarding the implementation of \spectralclip (\Cref{alg:themethod}). We perform the clipping by computing the SVD numerically and clipping it directly; we discuss alternatives in \Cref{sec:algo-details-app}.

\subsubsection{Hyperparameters}
\label{sec:hyperparams}

SingularClip introduces two hyperparameters: the \emph{clip ratio} $c \geq 1$, and the \emph{clipping period} $K$ (the number of gradient steps between successive clips). We briefly discuss setting both of these. 

\paragraph{Setting $c$.}
The parameter $c$ determines how much anisotropy a layer is allowed to represent. When $c=1$, SingularClip reduces to exact orthogonalization; as $c \to \infty$, it becomes the identity. Smaller values of $c$ more aggressively prevent anisotropy growth, but also perturb the learned transformations more and can cause a larger performance drop after clipping. Larger values preserve more of the current network, but may not sufficiently mitigate plasticity loss. We provide a sweep over the choice of $c$ in various settings in \Cref{sec:algo-details-app}.

\paragraph{Setting $K$.}
The parameter $K$ affects two aspects of the algorithm: how much anisotropy can accumulate in the network before clipping is applied, and the computational overhead of performing \spectralclip. We do not tune $K$, and instead set it based on the problem at hand: for continual learning experiments, we set it to the length of each task, and for reinforcement learning experiments, we match the resetting frequency of the baselines we compare against.

\subsubsection{On Two-Sided Clipping}

We design the algorithm to perform two-sided clipping; however, one may wonder whether one-sided clipping is sufficient. We first note that performing only lower-clipping will not prevent parameter norm growth, which will lead to plasticity loss due to effective learning rate decay (\Cref{sec:bacgroundPlasticity}). Conversely, performing only upper-clipping will prevent effective learning rate decay, but will not prevent AIPL, since the lower singular values can decrease without control. We provide ablations of one-sided clipping in \Cref{sec:algo-details-app} which support our arguments here.

\section{Empirical Results}\label{sec:empirical}
We now evaluate \spectralclip across a collection of continual supervised learning and reinforcement learning settings. Our main goal of this evaluation is to empirically study two hypotheses: (i) \spectralclip maintains plasticity better than methods which regularize the largest singular value only, and (ii) \spectralclip is able to better maintain task-relevant information than resetting. 
We separate our empirical analysis into two parts, experiments which focus on continual supervised learning settings, and deep reinforcement learning settings, as these are where plasticity loss has been found to occur most prevalently.
For continual learning experiments, we perform \spectralclip with $c=2$, and for reinforcement learning experiments, we use $c=4$, based on our sweeps in \Cref{sec:algo-details-app}. We intentionally did not tune $c$ at a per-task level in order to provide a fair comparison against baselines.
Further details can be found in \Cref{sec:exp-details}.

\subsection{Continual Learning Benchmarks}
We consider a collection of five continual learning tasks which are commonly used in the literature. We provide a brief description of each task below.

\input{widefig}

\begin{itemize}
 \item \textbf{Permuted MNIST} \citep{goodfellow2013empirical}: We sample 10,000 images from the MNIST training dataset. For each task, we fix a uniformly sampled permutation matrix, and apply this to the flattened images. The model's goal is to minimize the cross-entropy loss between the permuted images and the true labels.

 \item \textbf{Random Label MNIST} \citep{lyle2024normalization}: We fix a subset of 2,048 MNIST images and assign each one a uniformly sampled label from $\{1,\dots,10\}$. The network's goal is to memorize these random labels, which are randomly reassigned in each subsequent task.
 
 \item \textbf{Random Label CIFAR} \citep{zhang2017understanding}: We repeat the task setting of Random Label MNIST with CIFAR-10 images.
 
 \item \textbf{Random Label ImageNet} \citep{lewandowski2025learning}: We repeat the task setting of Random Label MNIST but with 16,384 ImageNet images whose class labels are randomly sampled from $\{1,\dots,1000\}$.
 
 \item \textbf{Continual ImageNet} \citep{dohare2023overcoming}: Each task samples two ImageNet classes without replacement. The learner performs binary classification for 500 successive tasks.
\end{itemize}

We chose these problems as they allow us to study the effect of different types of nonstationarity, as well as various architectures. 
Depending on the type of nonstationarity, each task requires different parts of the network to be re-learnt. For example, Permuted MNIST requires relearning the entire representation used for each task, while random label tasks can be solved exactly by maintaining the representation and learning a permutation of the final classification layer. For the architectures, MNIST-based experiments used a fully-connected network which operates on flattened image inputs, while CIFAR-10 and ImageNet-based experiments used a CNN architecture (see \Cref{sec:exp-details} for further details). 

For each setting, we repeat the experiment for 100 tasks, except for Continual ImageNet, as it runs for 500 tasks by definition. We report the online average train accuracy achieved by the network for each task, that is
\begin{equation}\label{eq:online-acc}
 A(T_i)=\frac1N\sum_{t=1}^{N} a_t,
\end{equation}
where $a_t$ is the accuracy on batch $t$ of the task $T_i$, and task $T_i$ lasts for $N$ steps. We choose this measure as we find it a clear measure of a network's plasticity: note that two baselines can reach the same final train accuracy, however the more plastic one will do so in less steps: this is reflected in online train accuracy, but not the final accuracy achieved. We further only consider train accuracy in this work, as this is the most reflective of a network's plasticity, that is, its ability to fit the training data it is receiving. Furthermore, generalization performance is not well-defined for many of the tasks considered, such as memorizing random labels. 

For each benchmark, we compared \spectralclip to six baselines: normalization layers, periodic resetting \citep{nikishin2022primacy}, normalize and project (NaP) \citep{lyle2024normalization}, spectral regularization \citep{lewandowski2025learning}, shrink \& perturb \citep{ash2020warm}, and DASH \citep{shin2024dash}. In the context of our hypotheses, these baselines were chosen as follows: normalization layers provide a base level of plasticity intervention through the use of a now-standard architecture. NaP and spectral regularization allow us to test whether \spectralclip is superior to methods which only regularize the largest singular value; these methods control the largest singular value by explicitly regularizing it (spectral regularization) or implicitly regularize it by projecting the Frobenius norm of weight matrices. Comparing against resetting allows us to capture how much performance is gained by preserving properties of the weight matrices as described in \Cref{sec:singularclip}. Comparing against shrink \& perturb and DASH allow us to measure our performance against empirically well-performing baselines.

We present the results of these experiments in \Cref{fig:cl}. Across the board, we find that \spectralclip outperforms all baselines considered. Compared to the baselines, \spectralclip experienced minimal loss of plasticity, observable in the other algorithms through either an onset of decreasing performance, or an early plateauing of performance.
Resetting experienced very little loss of plasticity -- indeed any which does occur would be due to non-weight-related effects, such as out-of-date optimizer statistics as discussed in \Cref{sec:bacgroundPlasticity}. Resetting performed worst in benchmarks which were too complicated to solve within a single task, such as Random Label CIFAR and ImageNet. This supports our second hypothesis: \spectralclip was able to maintain task-relevant information and greatly outperformed resetting here.

\subsection{Reinforcement Learning}

We next consider reinforcement learning across two settings: the experimental setup of \citet{nikishin2022primacy}, where we consider a baseline soft actor-critic (SAC) agent and compare its performance with periodic resetting and performing \spectralclip; and the BRO agent \citep{nauman2024bigger}, an RL agent which combines many recent improvements in the field, such as a modern architecture, distributional reinforcement learning, optimistic exploration, and notably periodic resetting. We modify the BRO agent to apply \spectralclip in place of resetting in order to test our second hypothesis directly, that is, whether \spectralclip can maintain more task-relevant information than resetting.

\subsubsection{SAC Experiments}

We evaluate a SAC agent on the DeepMind Control Suite (DMC) \citep{tassa2018deepmind} with periodic interventions every $2\times10^{5}$ environment steps, following \citet{nikishin2022primacy}. We sweep the update-to-data (UTD) ratio (a hyperparameter dictating how many gradient steps are taken for every environment step) across $\{1,4,8\}$. We include layer normalization in our network as recommended by \citet{hussing2024dissecting}, resulting in our baseline SAC agent appearing stronger than the baseline reported in \citet{nikishin2022primacy}.

\begin{figure}[t]
 \centering
 \begin{subfigure}[t]{0.45\linewidth}
 \centering
 \includegraphics[width=\linewidth]{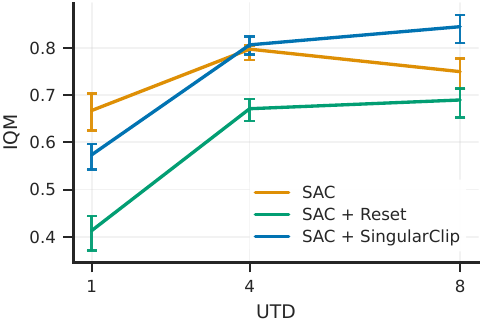}
 \end{subfigure}
 \begin{subfigure}[t]{0.5\linewidth}
 \centering
 \includegraphics[width=\linewidth]{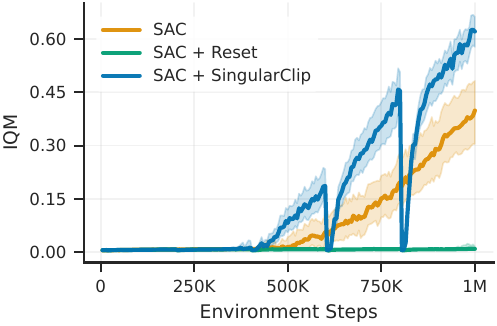}
 \end{subfigure}
 \caption{Performance of SAC-based agents aggregated across DMC suite. Performance is aggregated across 10 runs per task. Error bars indicate 95\% bootstrapped confidence intervals (left). Performance curves of SAC-based agents in the \texttt{humanoid-stand} environment averaged across 10 independent runs. Shaded regions correspond to 95\% bootstrapped confidence intervals (right).}
 \label{fig:sac-rl-plots}
\end{figure}

We present the final aggregated interquartile mean (IQM, \citep{Precipice2021}) scores of the agents across the suite in \cref{fig:sac-rl-plots}. 
Notably, we find that at UTD 1 the baseline SAC agent outperforms both periodic resetting and periodic \spectralclip, indicating that the loss of plasticity is not sufficient to require that amount of intervention in this regime. However, the performance of both resetting and \spectralclip scales much more favourably than that of the baseline SAC agent with respect to UTD: this is due to the increased prevalence of plasticity loss in higher UTD regimes. At all UTD levels, \spectralclip reaches an IQM level significantly higher than resetting, indicating that \spectralclip-based agents were able to retain more task-relevant information, and eventually attain a higher score.

In \Cref{fig:sac-rl-plots} we present the learning curves for the \texttt{humanoid-stand} environment, at UTD ratio 1. We chose this setting in particular, as it presents a striking example of our previous argument. The environment itself is relatively difficult, and for any agent it takes roughly 400,000 environment steps to receive any reward. Due to the harshness of the resets, the SAC + Reset agent is unable to receive any positive reward across the entirety of training. On the other hand, the agent with periodic \spectralclip is able to retain enough information in the weights to achieve positive reward, and even outperform the baseline SAC agent.

\subsubsection{BRO Experiments}

We evaluate the BRO agent \citep{nauman2024bigger} against a modified version of the agent which replaces all resets with \spectralclip applied at the same timesteps, and we keep all other hyperparameters the same. We evaluate these agents across the entirety of the DMC suite, and present the results in \Cref{fig:BRO-DMC-1}.

\begin{figure}[!ht]
 \centering
 \includegraphics{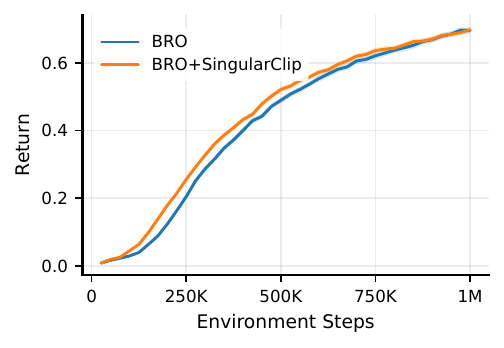}
 \caption{Performance of BRO-based agents across the DMC suite, across 20 independent runs. Shaded areas indicate bootstrapped 95\% confidence intervals.}
 \label{fig:BRO-DMC-1}
\end{figure}

This modification results in statistically significant (non-overlapping confidence intervals) performance improvements up to approximately 800,000 environment steps. 
The improvement ending at this point can be explained by the fact that BRO performs its final reset at 750,000 environment steps, so at this point it is possible the resetting-based models have performed enough gradient steps to fit the current replay buffer, and both networks achieve similar performance.
Overall, we emphasize the strong and significant improvement achieved by simply replacing the resetting with \spectralclip, achieved with a minimal change to the algorithm itself. This provides further evidence for our second hypothesis: by retaining task-specific knowledge over resetting, we are able to achieve sample complexity gains, despite all other modern improvements present in the BRO agent.

\section{Related Work}\label{sec:related}

A growing line of work aims to mechanistically explain loss of plasticity.
In reinforcement learning, capacity loss has been studied as a failure mode induced by nonstationary targets, with mitigation via regularization that anchors network outputs to their initialization \citep{lyle2022capacityloss}. Empirically, \citet{lyle2023understanding} provide a broad characterization of plasticity loss and connect it to changes in curvature over training, motivating interventions that preserve trainability. Related curvature-centric accounts argue that loss of plasticity can be viewed as a reduction in usable curvature directions during training \citep{lewandowski2023directions}. \citet{he2025spectralcollapse} identify Hessian spectral collapse at task boundaries and introduce a rank-based notion of trainability that unifies several plasticity-preserving strategies through the lens of curvature preservation. \citet{tang2025churn} analyze loss of plasticity in continual RL, proposing mechanisms to reduce churn and stabilize training dynamics. \citet{joudaki2025barriers} develop a dynamical-systems perspective in which loss of plasticity corresponds to gradient trajectories becoming trapped near stable manifolds yielding a first-principles account of why certain non-plastic states are difficult to escape. Related theoretical treatments of continual learning under gradient descent include \citet{jung2025convergence} and \citet{taheri2025theory}.

\citet{balestriero2025lejepa} highlight issues caused by anisotropic representations in the context of world modelling, and introduced a SigReg loss to regularize the representations towards an isotropic Gaussian. \citet{pasand2026stable} apply the SigReg loss to deep reinforcement learning. \citet{cisse2017parseval} introduce Parseval regularization, a method to regularize weight matrices towards orthogonality, to improve robustness towards adversarial examples. \citet{chung2024parseval} applied Parseval regularization to continual learning. \citet{han2026fire} replaced the Parseval regularization loss with periodic orthogonalization. \citet{palenicek2025xqc} demonstrated that Hessian conditioning affects performance in reinforcement learning, and that explains the performance of certain architectures such as normalization and categorical losses. \citet{lee2025hyperspherical} introduced a deep RL agent which improved performance by projecting weights to the hypersphere at each timestep, similar to the procedure of NaP.

\section{Conclusion}

In this paper, we identified a new mechanism of plasticity loss in deep neural networks, caused by the growth of weight matrix anisotropy throughout learning. We demonstrated this phenomenon empirically, and theoretically analyzed it in a simplified setting. We introduced a method, \spectralclip, to mitigate its effects and obtained strong empirical performance across a range of supervised learning and reinforcement learning settings.

\section*{Acknowledgments}

We acknowledge the support of NSERC through the Discovery Grant program [2021-03701], Polytechnique Montréal’s PIED program, and IVADO R$^3$AI grant. Resources used in preparing this research were provided, in part, by the Province of Ontario, the Government of Canada through CIFAR, and companies sponsoring the Vector Institute, as well as the compute resources provided by Mila.

\newpage

\bibliography{main}
\bibliographystyle{icml2025}

\input{appendix-new}

\end{document}

%% file: math_commands.tex
\usepackage{amsmath,amsfonts,bm}

\def\eqref#1{equation~\ref{#1}}

\def\1{\bm{1}}

\DeclareMathAlphabet{\mathsfit}{\encodingdefault}{\sfdefault}{m}{sl}
\SetMathAlphabet{\mathsfit}{bold}{\encodingdefault}{\sfdefault}{bx}{n}

%% file: widefig.tex
\begin{figure}[!t]
  \centering
    \begin{subfigure}[t]{0.32\textwidth}
    \centering
    \includegraphics[width=\linewidth]{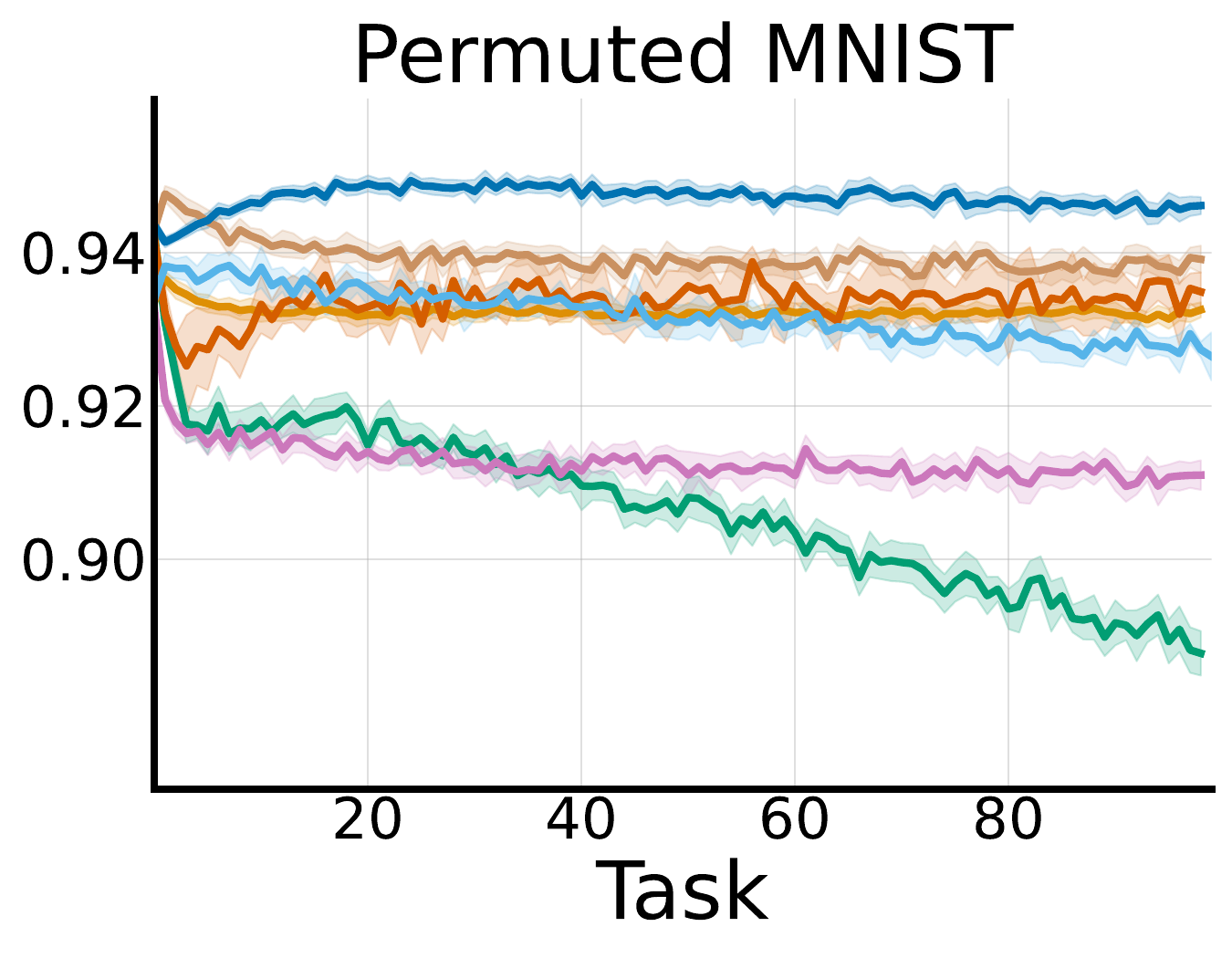}
  \end{subfigure}
  \begin{subfigure}[t]{0.32\textwidth}
    \includegraphics[width=\linewidth]{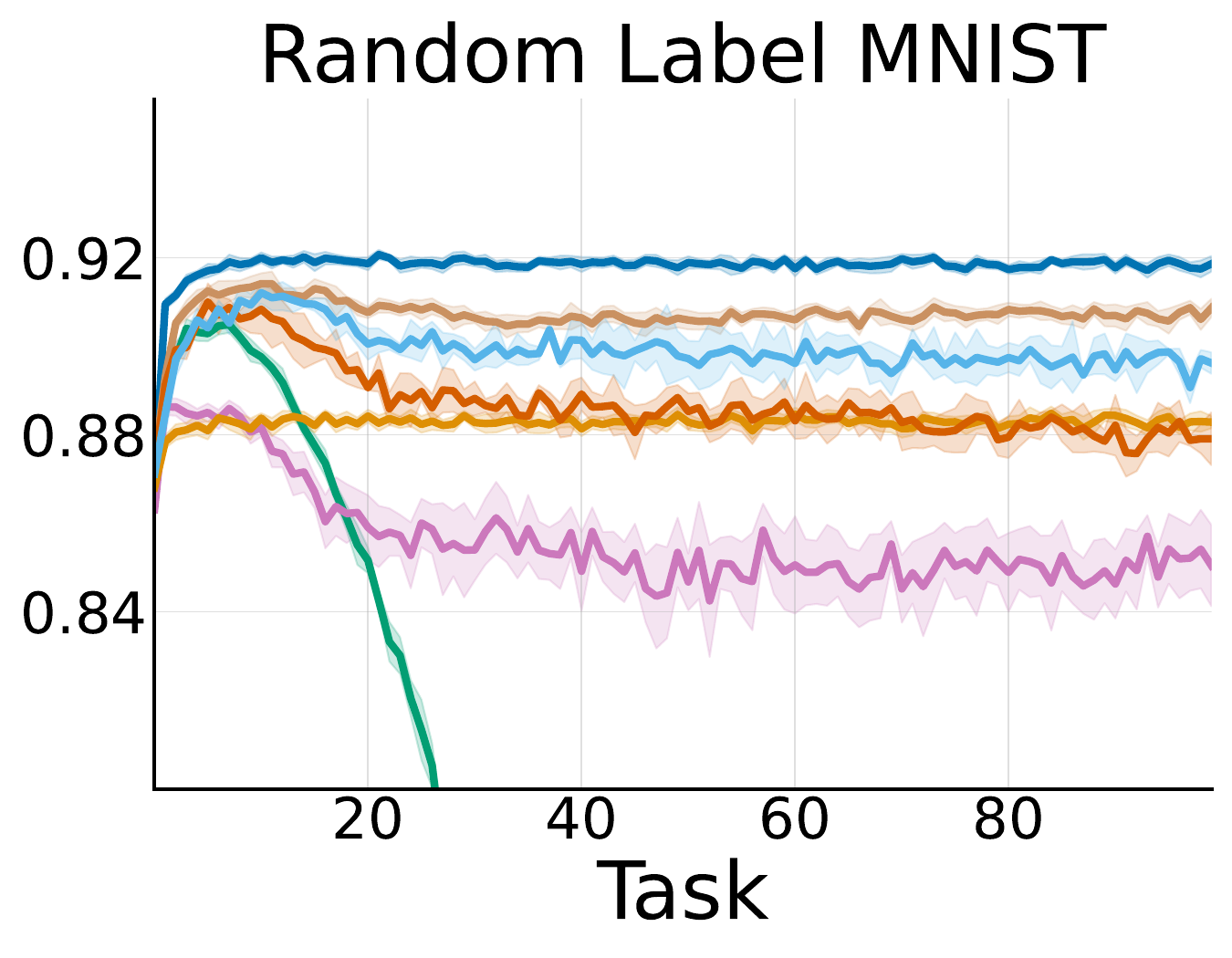}
  \end{subfigure}
  \begin{subfigure}[t]{0.32\textwidth}
    \includegraphics[width=\linewidth]{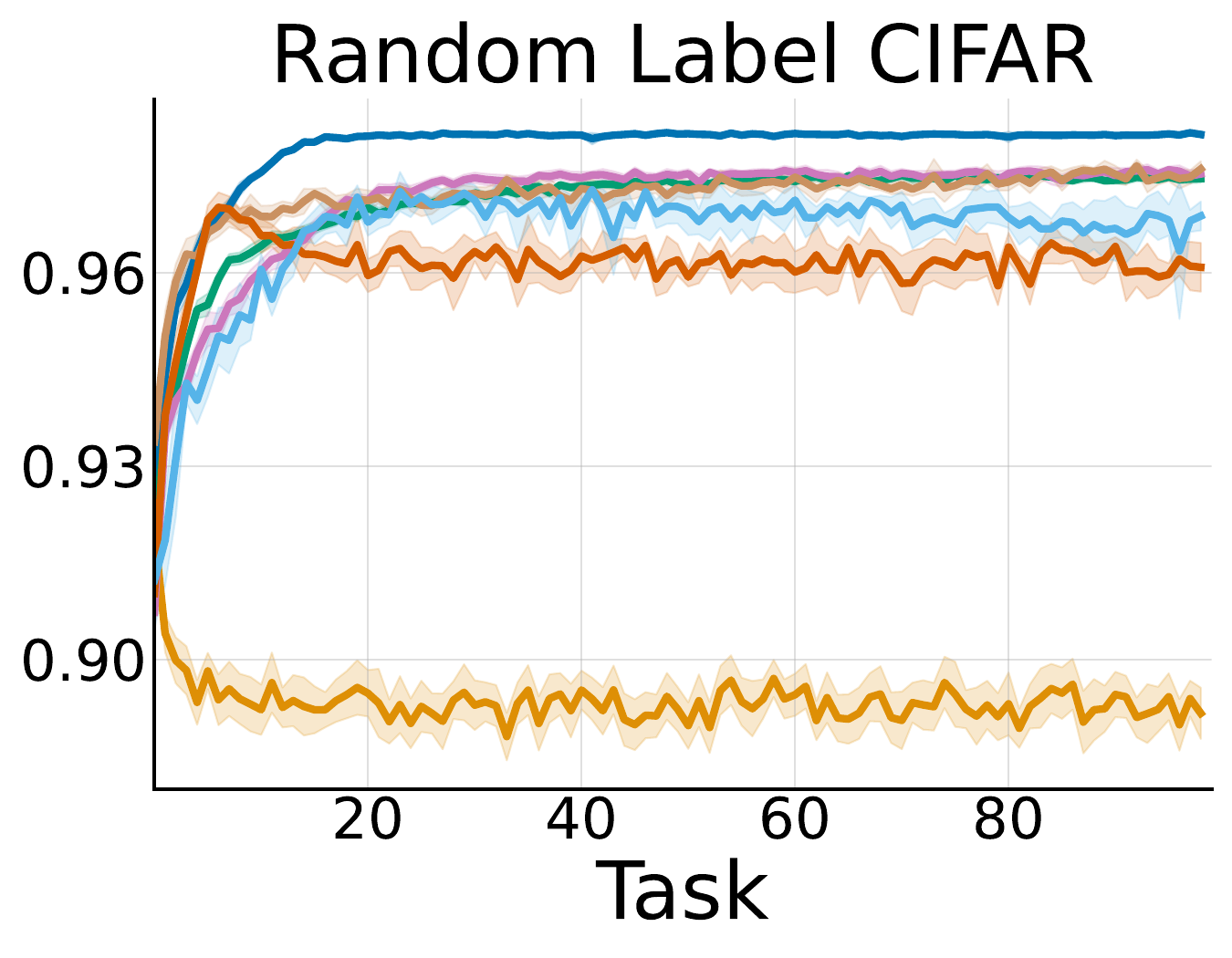}
  \end{subfigure}
  
  \begin{subfigure}[t]{0.32\textwidth}
    \centering
    \includegraphics[width=\linewidth]{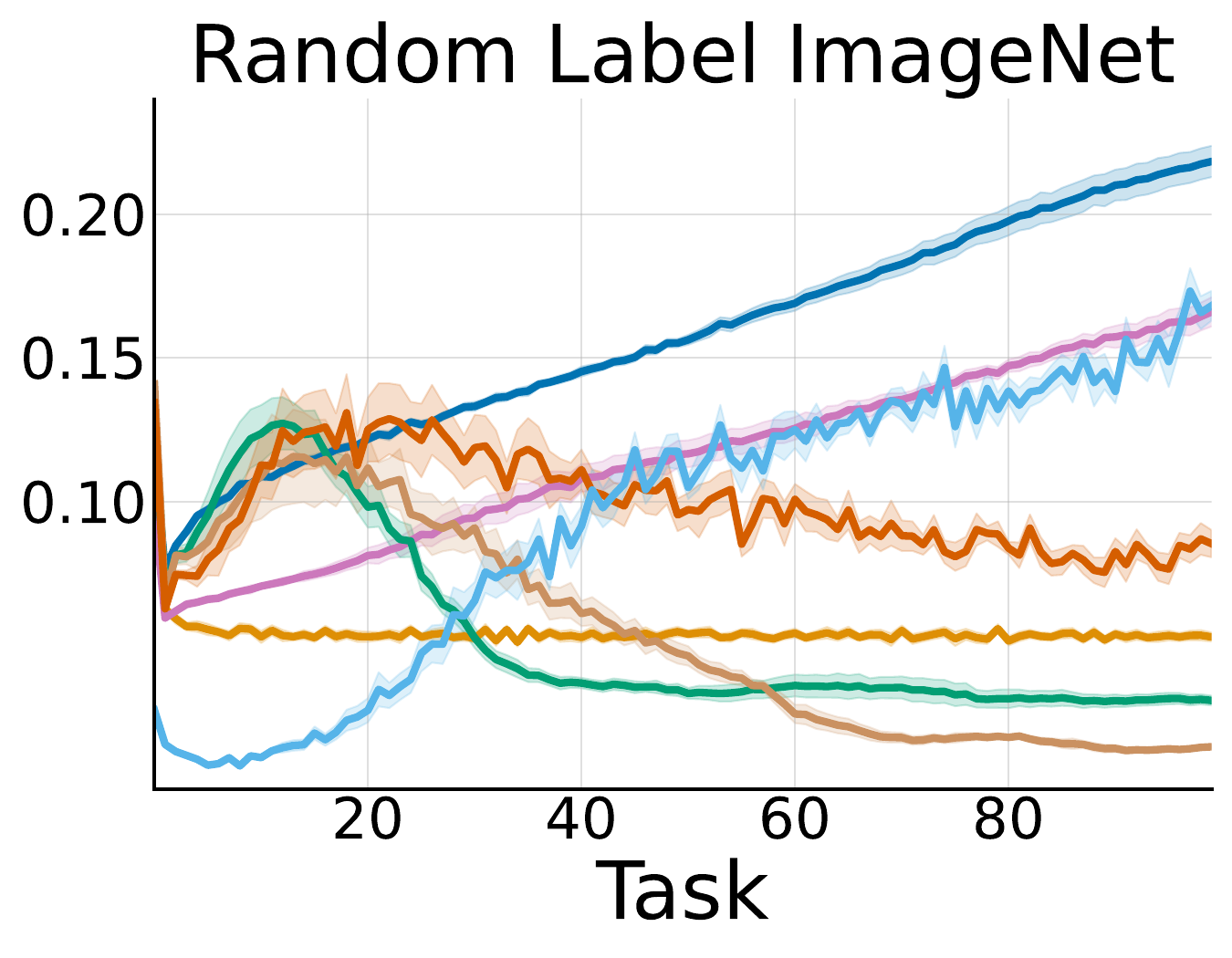}
  \end{subfigure}
  \begin{subfigure}[t]{0.32\textwidth}
    \centering
    \includegraphics[width=\linewidth]{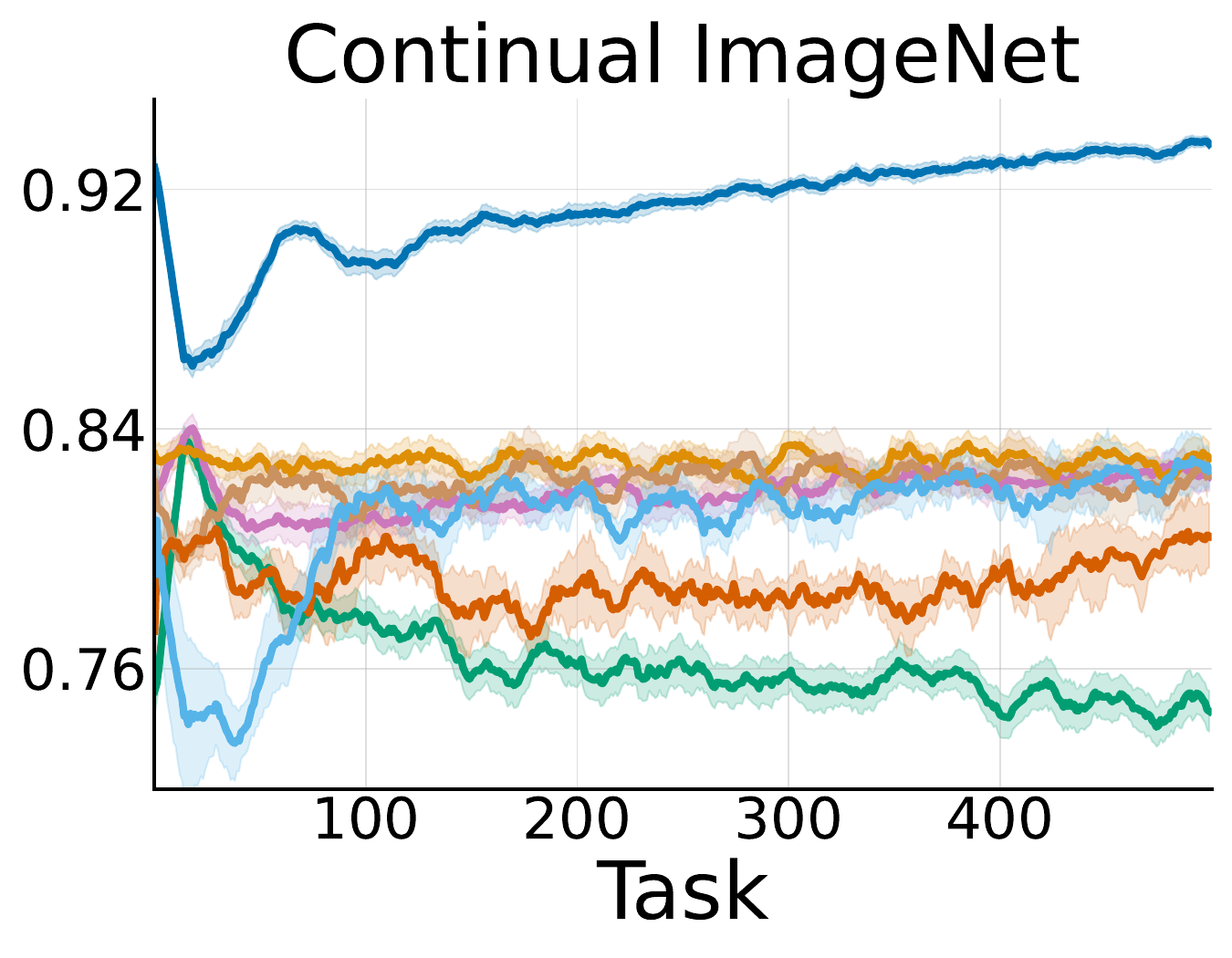}
  \end{subfigure}

    \begin{tikzpicture}

    \def\firstRowY{0.5}
    \def\secondRowY{-0.1}
    
    \draw [LayerNorm, thick, line width=2pt] (-5,\firstRowY) -- (-4.5,\firstRowY);
    \node[anchor=west] at (-4.5,\firstRowY) {LayerNorm};

    \draw [Resetting, thick, line width=2pt] (-2.3,\firstRowY) -- (-1.8,\firstRowY);
    \node[anchor=west] at (-1.8,\firstRowY) {Resetting};

    \draw [NaP, thick, line width=2pt] (0.3,\firstRowY) -- (0.8,\firstRowY);
    \node[anchor=west] at (0.8,\firstRowY) {NaP};

    \draw [spectral, thick, line width=2pt] (2,\firstRowY) -- (2.5,\firstRowY);
    \node[anchor=west] at (2.5,\firstRowY) {Spectral Reg.};

    \draw [ourMethod, thick, line width=2pt] (4.9,\firstRowY) -- (5.4,\firstRowY);
    \node[anchor=west] at (5.4,\firstRowY) {\spectralclip};

        \draw [snp, thick, line width=2pt] (-1.8,\secondRowY) -- (-1.3,\secondRowY);
    \node[anchor=west] at (-1.3,\secondRowY) {Shrink \& Perturb};

    \draw [das, thick, line width=2pt] (1.8,\secondRowY) -- (2.3,\secondRowY);
    \node[anchor=west] at (2.3,\secondRowY) {DASH};

\end{tikzpicture}

  \caption{Comparison of average online accuracy (\Cref{eq:online-acc}) on continual supervised learning tasks. Shaded regions represent bootstrapped estimates of 95\% confidence intervals. Each experiment is repeated over 30 independent runs.}
  \label{fig:cl}
\end{figure}

%% file: appendix-new.tex
\appendix
\onecolumn

\crefalias{section}{appendix}

\renewcommand{\thetheorem}{\thesection.\arabic{theorem}}
\renewcommand{\theproposition}{\thesection.\arabic{proposition}}
\renewcommand{\thelemma}{\thesection.\arabic{lemma}}
\renewcommand{\thecorollary}{\thesection.\arabic{corollary}}
\renewcommand{\thedefinition}{\thesection.\arabic{definition}}
\renewcommand{\theassumption}{\thesection.\arabic{assumption}}

\renewcommand{\theHtheorem}{\thesection.\arabic{theorem}}
\renewcommand{\theHproposition}{\thesection.\arabic{proposition}}
\renewcommand{\theHlemma}{\thesection.\arabic{lemma}}
\renewcommand{\theHcorollary}{\thesection.\arabic{corollary}}
\renewcommand{\theHdefinition}{\thesection.\arabic{definition}}
\renewcommand{\theHassumption}{\thesection.\arabic{assumption}}

\section*{\centering APPENDICES}

For convenience, we collect the contents of the appendix:
\begin{itemize}
 \item \Cref{sec:exp-details} contains experimental details in order to replicate all experiments in the text.
 \item \Cref{sec:algo-details-app} contains details regarding the implementation of \spectralclip and hyperparameter ablations.
 \item \Cref{sec:theory-appendix} restates main theoretical results with proofs.
 \item \Cref{sec:singularclip-proofs} provides proof for theoretical statements relating to \spectralclip.
\end{itemize}

\section{Experimental details}\label{sec:exp-details}

Code for all experiments is provided in supplementary material, and can be used to reproduce the results exactly. We provide details of all empirical setups below.

\textbf{Permuted MNIST. }
We subsample a collection of 10,000 MNIST train dataset images for each experiment run. We trained each model for 10 epochs on each task using a batch size of 500 (so 200 gradient steps per task). We use a MLP comprised of a linear layer of shape $[784, 256]$, a layer normalization layer, a ReLU nonlinearity, and a linear layer of shape $[256, 10]$.

\textbf{Random Label MNIST. }
We subsample a collection of 2,048 MNIST train images for each experiment. We train each model for 10 epochs using a batch size of 512 (so 40 gradient steps per task). We use a MLP comprised of a linear layer of shape $[784, 256]$, a layer normalization layer, a ReLU nonlinearity, and a linear layer of shape $[256, 10]$.

\textbf{Random Label CIFAR. }
We subsample a collection of 2,048 CIFAR-10 train images for each experiment. We train each model for 10 epochs using a batch size of 512 (so 40 gradient steps per task). We use a neural network beginning with two convolutional layers, each followed by a batch normalization layer, ReLU nonlinearity, and a max pooling layer. The output is then flattened and passed to a linear layer of shape $[4096, 256]$, a layer normalization, ReLU nonlinearity, and a linear layer of shape $[256, 10]$.

\textbf{Random Label ImageNet. }
We subsample a collection of 16,384 ImageNet train images for each experiment. We train each model for 20 epochs using a batch size of 4,096 for a total of 80 gradient steps per task. We use a neural network beginning with three convolutional layers, each followed by a batch normalization layer, ReLU nonlinearity, and a max pooling layer. The output is then flattened and passed to a linear layer of shape $[4096, 512]$, a layer normalization, ReLU nonlinearity, and a linear layer of shape $[512, 1000]$.

\textbf{Continual ImageNet. }
For each experiment, we choose the sequence of tasks by fixing a uniform permutation of length 1000 and generating 500 pairs by grouping consecutive elements in the permutation. For each task, we sample 600 samples from each class, and shuffle them for a task dataset of size 1,200. The network is then trained for 10 epochs with a batch size of 100. We use a neural network beginning with three convolutional layers, each followed by a batch normalization layer, ReLU nonlinearity, and a max pooling layer. The output is then flattened and passed to a linear layer of shape $[4096, 512]$, a layer normalization, ReLU nonlinearity, and a linear layer of shape $[512, 1000]$.

\section{Algorithmic Details}\label{sec:algo-details-app}
\subsection{Hyperparameter Ablations}\label{sec:hparam-sweeps}

\begin{figure}[h]
 \centering
 \includegraphics[width=\linewidth]{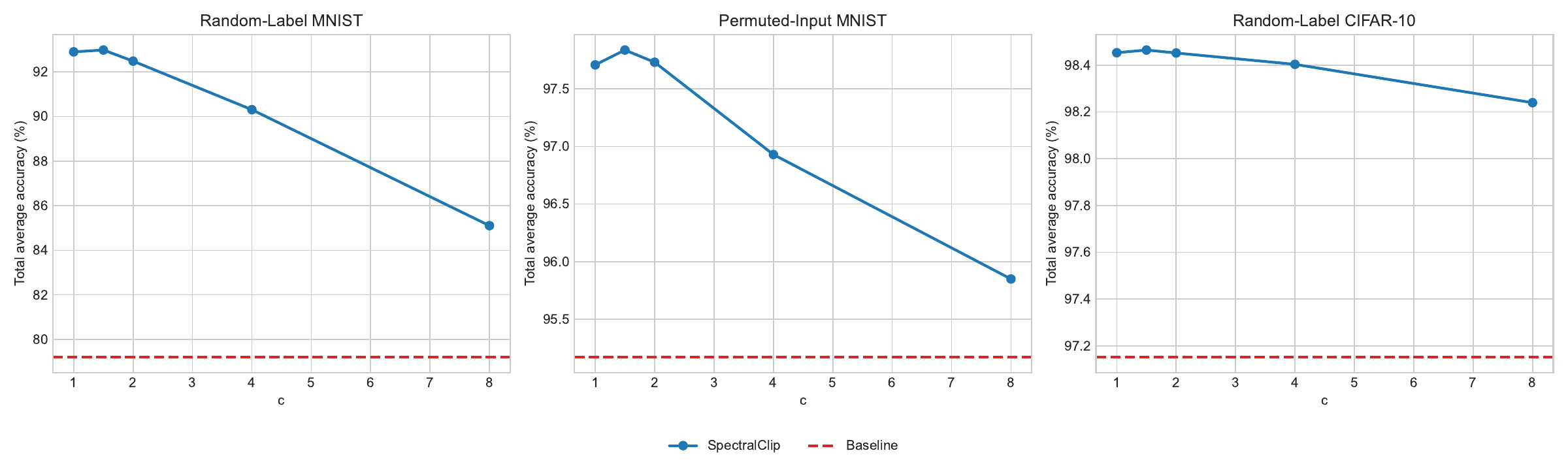}
 \caption{Ablation of $c$ over a subset of continual learning tasks.}
 \label{fig:cl-sweep}
\end{figure}

In \Cref{fig:cl-sweep} we perform an ablation of different values of $c$ across continual learning tasks, and similarly for RL tasks in \Cref{fig:rl-sweep}. We generally find that continual learning tasks are optimized at lower values of $c$, this reflects a larger degree of nonstationarity in the continual learning tasks. We compare the performance relative to a LayerNorm baseline.

A larger difference in behaviour can be observed in the two RL tasks: in the UTD-1 setting, the optimal $c$ is not found, and performance seems to continuously increase with $c$. This is reflective of our results in \Cref{sec:empirical}, where \spectralclip underperformed SAC. This reflects our \emph{undertuning} of our method by not tuning $c$ per-UTD (which, as stated we did to obtain a fair comparison), as a properly tuned $c$ should never underperform the baseline on a task (since as $c\to\infty$ we recover the baseline algorithm), in the UTD-8 setting, we can see that the optimal $c$ is reached at $c=4$, demonstrating the higher degree of plasticity loss in this setting.

In \Cref{fig:one-sided-sweep}, we present an ablation of one-sided clipping across continual learning tasks. As suggested in \Cref{sec:singularclip}, when only lower-clipping is performed, the plasticity loss is dominated by the effective learning rate decay due to parameter norm growth, and its performance is similar to the baseline algorithm across all $c$. Upper-clipping alone achieves strong gains compared to the baseline, however it still underperforms the two-sided clipping results present in \Cref{fig:cl-sweep}, as it is not able to prevent AIPL since the lower singular values are unbounded.

\begin{figure}[h]
 \centering
 \includegraphics[width=\linewidth]{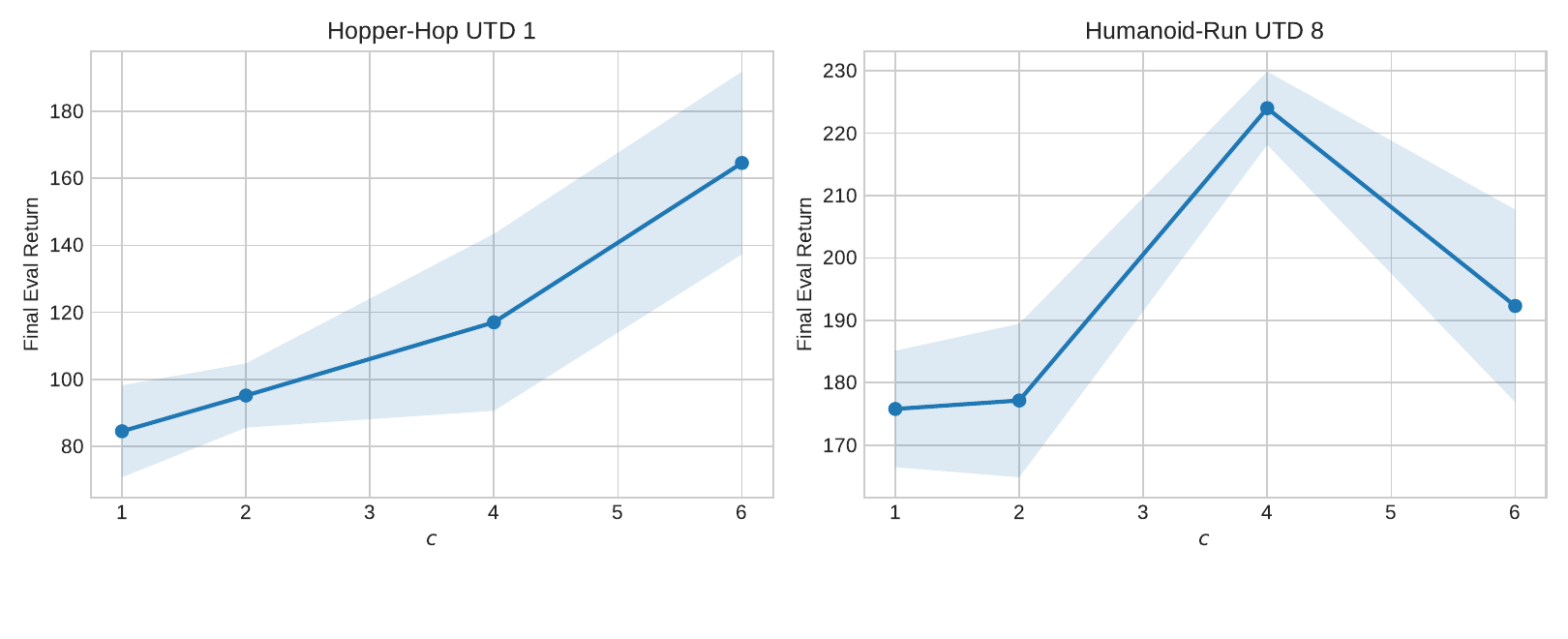}
 \caption{Ablation of $c$ over a subset of RL settings.}
 \label{fig:rl-sweep}
\end{figure}

\begin{figure}[h]
 \centering
 \includegraphics[width=\linewidth]{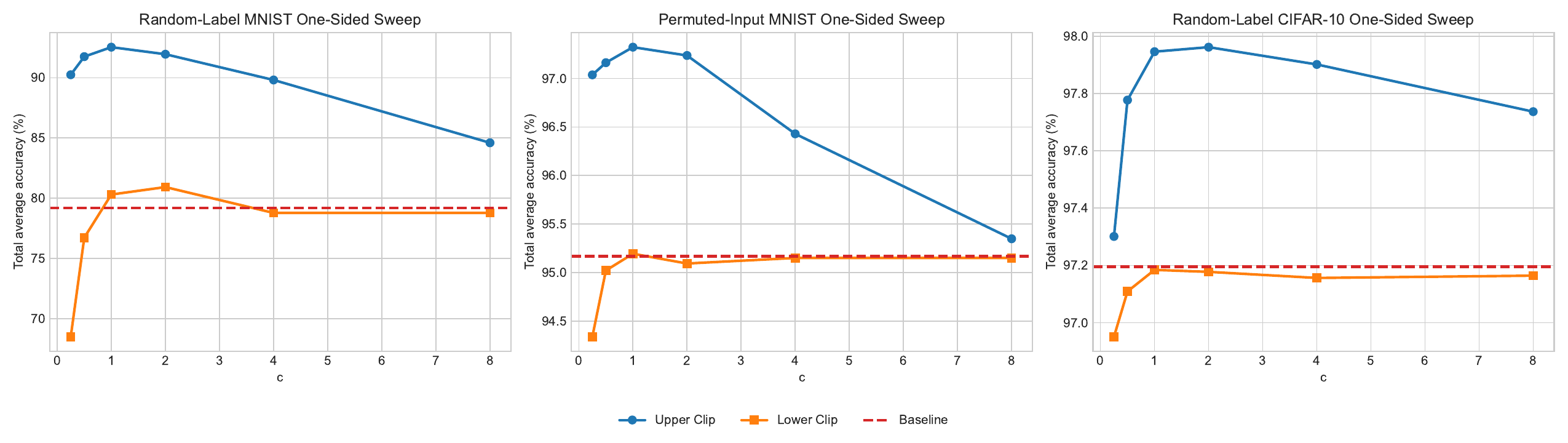}
 \caption{Ablation of one-sided clipping across a subset of continual learning tasks. The baseline curve corresponds to a network with no intervention applied.}
 \label{fig:one-sided-sweep}
\end{figure}

\begin{figure}[h]
 \centering
 \includegraphics[width=1\linewidth]{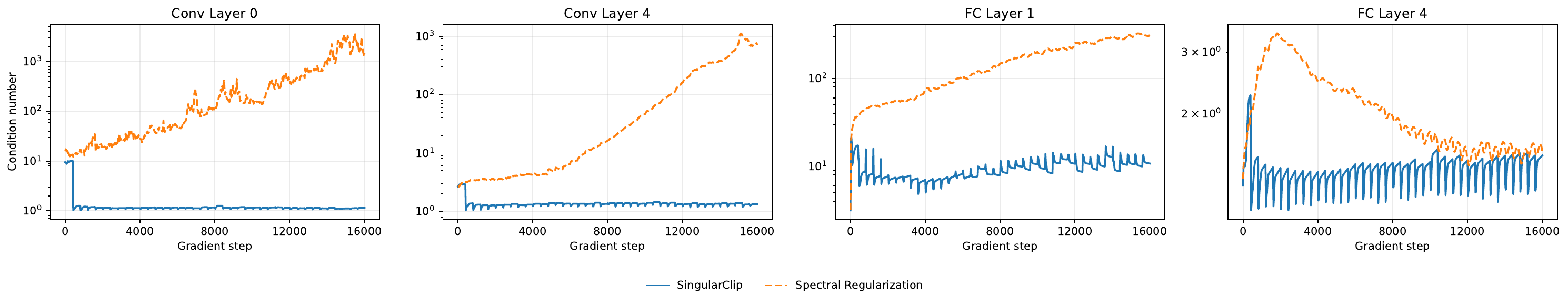}
 \caption{Comparison of condition number growth between SingularClip and spectral regularization on the Random Label CIFAR task, across different network layers.}
\end{figure}

\subsection{Implementation of Clipping}\label{sec:clipping-impl}

We now highlight two possible methods for implementing the clipping procedure $\mathrm{sc}^b_a$.

The first method is straightforward: we directly compute the SVD of each weight matrix, clip the singular value matrix, and multiply the remaining matrices. This results in accurate numerical results, but is unable to fully take advantage of tensor cores and features of modern GPUs, and can be unstable in mixed-precision settings.

The second method is based on Newton-Schulz iterations, odd matrix polynomials with certain coefficients designed to approximate matrix orthogonalization, which are used in the Muon optimizer \citep{jordan2024muon}. Recent works have introduced different polynomial coefficients in order to converge faster and with less error \citep{cesista2025muonoptcoeffs, amsel2025polar}. These orthogonalization procedures can then be used to perform singular value clipping as described by \citet{cesista2025spectralclipping}.

In our evaluations, we found that the Newton-Schulz-based singular clipping achieved lower performance than SVD-based, due to the precision error incurred. We produced estimates of the Frobenius error when clipping to $[1/2, 2]$ in \Cref{tab:orthogonalization_error}. Furthermore, we found that since we were not applying singular clipping frequently (e.g., every 200,000 gradient steps in the RL experiments in \Cref{sec:empirical}), the overhead incurred from computing the SVD was insignificant. We display the measured wall clock times for various matrix shapes in \Cref{tab:clip_runtime}, and note that in the worst case considered the increase is on the order of 50ms, which is negligible when the clipping is done infrequently. 

While we used SVD for our experiments, we note that in larger-scale architectures where SVD may become a bottleneck, Newton-Schulz iterations can be used as an efficient alternative. As a note on its scalability, \citet{newhouse2025training} applied Newton-Schulz-based singular clipping at every gradient step for up to 145M parameter transformer models, demonstrating that \spectralclip can be scaled to similar settings without significant overhead.

\begin{table}[ht]
\centering
\caption{Relative Frobenius error of Newton-Schulz clipping for $c=2$ across different matrix shapes. Each value is computed by averaging over 30 runs. Uncertainties indicate two standard errors.}

\begin{tabular}{ll}
\hline
\textbf{Shape} &
Relative Frobenius Error (\%) \\
\hline
(128, 128) & $8.850 \times 10^{-3}$ \scriptsize{\textcolor{gray}{$\pm$ $1.9 \times 10^{-3}$}} \\
(256, 256) & $1.019 \times 10^{-2}$ \scriptsize{\textcolor{gray}{$\pm$ $8.8 \times 10^{-4}$}} \\
(512, 512) & $1.118 \times 10^{-2}$ \scriptsize{\textcolor{gray}{$\pm$ $3.6 \times 10^{-4}$}} \\
(1024, 1024) & $1.076 \times 10^{-2}$ \scriptsize{\textcolor{gray}{$\pm$ $1.3 \times 10^{-4}$}} \\
(256, 512) & $7.119 \times 10^{-3}$ \scriptsize{\textcolor{gray}{$\pm$ $9.9 \times 10^{-5}$}} \\
\hline
\end{tabular}
\label{tab:orthogonalization_error}
\end{table}

\begin{table}[ht]
\centering
\caption{Wallclock time comparisons of singular clipping methods on a NVIDIA L40S GPU. Each value is computed by averaging over 30 runs after 10 warmup iterations. Uncertainties indicate two standard errors.}
\begin{tabular}{lcc}
\hline
\textbf{Shape} &
SVD (ms) &
Newton-Schulz (ms) \\
\hline
(128, 128) & 2.527 \scriptsize{\textcolor{gray}{$\pm$ 0.075}} & 2.219 \scriptsize{\textcolor{gray}{$\pm$ 0.013}} \\
(256, 256) & 6.282 \scriptsize{\textcolor{gray}{$\pm$ 0.006}} & 2.540 \scriptsize{\textcolor{gray}{$\pm$ 0.653}} \\
(512, 512) & 17.390 \scriptsize{\textcolor{gray}{$\pm$ 0.017}} & 2.287 \scriptsize{\textcolor{gray}{$\pm$ 0.008}} \\
(1024, 1024) & 51.230 \scriptsize{\textcolor{gray}{$\pm$ 0.259}} & 4.887 \scriptsize{\textcolor{gray}{$\pm$ 0.029}} \\
(256, 512) & 6.515 \scriptsize{\textcolor{gray}{$\pm$ 0.009}} & 2.417 \scriptsize{\textcolor{gray}{$\pm$ 0.022}} \\
\hline
\end{tabular}
\label{tab:clip_runtime}
\end{table}

\subsection{\spectralclip with Muon baseline}

One may wonder whether AIPL is an issue when using Muon \citep{jordan2024muon} as an optimizer: indeed, Muon orthogonalizes the update before adding it to the weight matrices, and with perfect orthogonalization, this would not lead to anisotropy. However, since Muon employs Newton-Schulz iterations \citep{jordan2024muon, cesista2025muonoptcoeffs, amsel2025polar}, we find that the numerical error accrued leads to an increase in anisotropy, and hence AIPL. We present this on the Continual ImageNet task in \Cref{fig:muon-comparison}. In the left panel, we can see that Muon dampens the anisotropy growth observed with Adam as seen in \Cref{fig:cifar-layer-conds}, but still experience a growth nonetheless. This indeed leads to a plasticity loss in the right panel, as the Muon baseline is not able to achieve the same performance as Muon with SingularClip.

\begin{figure}[h]
 \centering
 \includegraphics[width=\linewidth]{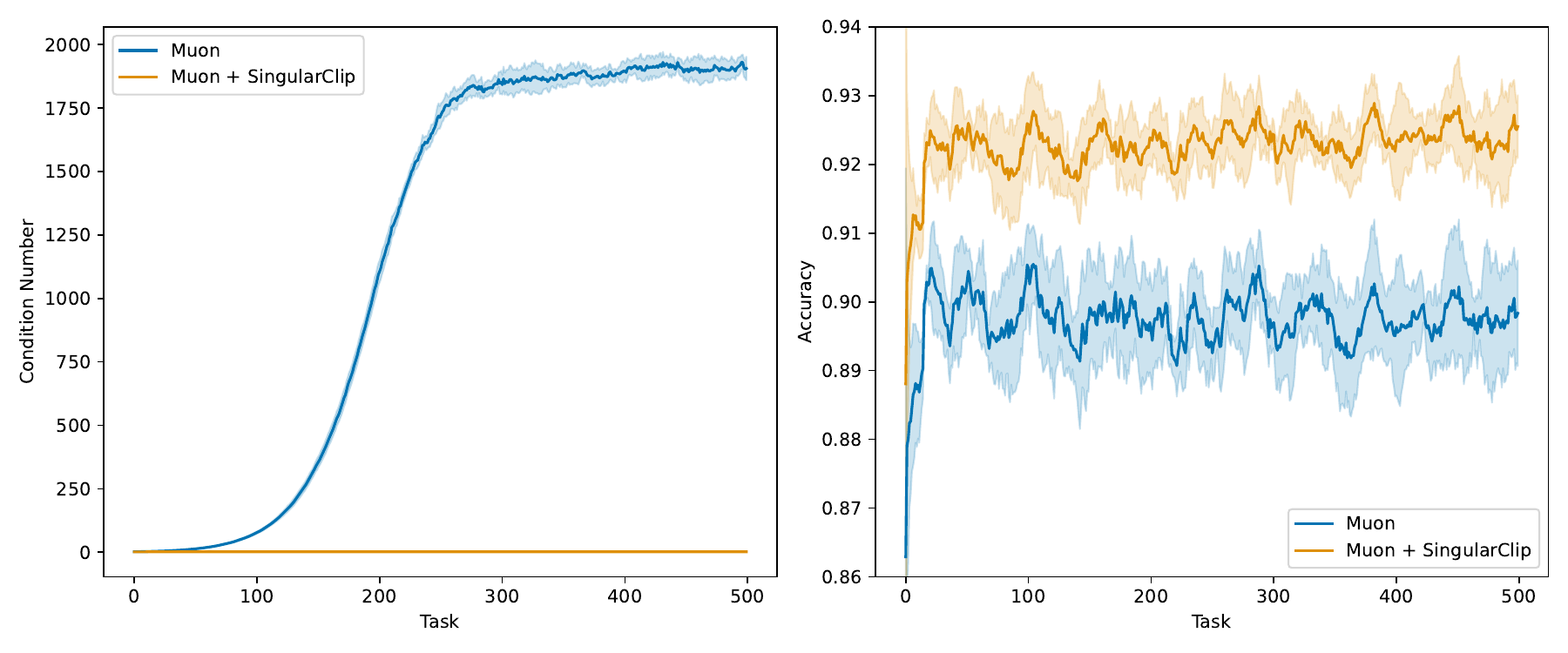}
 \caption{Comparison of adding SingularClip to Muon on the Continual ImageNet task. We plot average weight condition number (left) and online accuracy (right). Error bars represent bootstrapped 95\% confidence intervals across 10 runs.}
 \label{fig:muon-comparison}
\end{figure}

\section{Theoretical Results}\label{sec:theory-appendix}
\addcontentsline{toc}{section}{Theoretical Appendix}
\setcounter{theorem}{0}

We now introduce the theoretical setting and prove a number of auxiliary results to prove the main theorem. The statements and proofs of the main results can be found in Appendix~\ref{app:weak-dynamics}.

\textbf{Model, Data, and Loss.} We consider a two-layer linear network with input dimension $d$, hidden width $r$, and $K$ output classes. The trainable weights are $W_1 \in \mathbb{R}^{r \times d}$ and $W_2 \in \mathbb{R}^{K \times r}$. The network outputs are $f(x;W)=W_{2}W_{1}x$, yielding probabilities $p(x;W)=\text{Softmax}(f(x;W))$. We assume a realizable setting provided by a target distribution $p^*(x)$.

We denote random variables by uppercase letters (e.g., $X$) and realizations by lowercase (e.g., $x$). Vectors are columns; the Euclidean norm is $\|\cdot\|$. For a matrix $A$, $\|A\|_F$ is the Frobenius norm and $\langle A,B\rangle_F = \text{Tr}(A^{\top}B)$ denotes the Frobenius inner product. For compatible matrices, $\|Ax\|_{2} \le \|A\|_{op}\|x\|_{2} \le \|A\|_{F}\|x\|_{2}$.

\textbf{Parameter Space Geometry.} We treat the network parameters as a tuple $W = (W_1, W_2)$ residing in the product space $\mathcal{P} = \mathbb{R}^{r \times d} \times \mathbb{R}^{K \times r}$. We equip $\mathcal{P}$ with the standard Euclidean product norm:

$$ \|W\|_{\mathcal{P}} := \sqrt{\|W_1\|_F^2 + \|W_2\|_F^2}. $$

The associated inner product is $\langle A, B \rangle_{\mathcal{P}} := \langle A_1, B_1 \rangle_F + \langle A_2, B_2 \rangle_F$.

\textbf{Complementary Projections.} Let $\mathcal{S} \subseteq \mathbb{R}^r$ be a subspace and $P_{\mathcal{S}} \in \mathbb{R}^{r \times r}$ be the orthogonal projector onto $\mathcal{S}$ (satisfying $P_{\mathcal{S}}^2 = P_{\mathcal{S}}$ and $P_{\mathcal{S}}^\top = P_{\mathcal{S}}$). We define the complementary projector onto the weak subspace as $P_{\mathcal{W}} := I_r - P_{\mathcal{S}}$, which projects onto the orthogonal complement $\mathcal{W} = \mathcal{S}^\perp$.

\begin{proposition}[Pythagorean identities]\label{prop:pythagorean}

Let $\mathcal{S} \subseteq \mathbb{R}^r$ be a subspace with orthogonal projector $P_{\mathcal S}$ and complementary projector $P_{\mathcal W} = I_r - P_{\mathcal S}$.

The Frobenius norm decomposes additively under these projections:

\begin{enumerate}

\item \textbf{Left-Projection:} For any $A \in \mathbb{R}^{r \times n}$, $\|A\|_F^2 = \|P_{\mathcal S}A\|_F^2 + \|P_{\mathcal W}A\|_F^2$.

\item \textbf{Right-Projection:} For any $B \in \mathbb{R}^{m \times r}$, $\|B\|_F^2 = \|B P_{\mathcal S}\|_F^2 + \|B P_{\mathcal W}\|_F^2$.

\end{enumerate}

\end{proposition}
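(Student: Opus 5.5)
The plan is to expand the squared norms in an orthonormal basis adapted to the splitting $\mathbb{R}^r = \mathcal S \oplus \mathcal W$, using only that $P_{\mathcal S}$ and $P_{\mathcal W}$ are complementary orthogonal projectors. Both identities reduce to the vector Pythagorean theorem applied to the columns (left case) or rows (right case) of the matrix. Equivalently, the cross term in the Frobenius inner product vanishes because $P_{\mathcal S}^\top P_{\mathcal W} = P_{\mathcal S}(I_r - P_{\mathcal S}) = P_{\mathcal S} - P_{\mathcal S}^2 = 0$.

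\textbf{Left-projection.} Write $A = P_{\mathcal S}A + P_{\mathcal W}A$, which holds since $P_{\mathcal S}+P_{\mathcal W}=I_r$. Expanding,
\[
\|A\|_F^2 = \|P_{\mathcal S}A\|_F^2 + \|P_{\mathcal W}A\|_F^2 + 2\langle P_{\mathcal S}A, P_{\mathcal W}A\rangle_F .
\]
The cross term equals $2\,\mathrm{Tr}(A^\top P_{\mathcal S}^\top P_{\mathcal W} A)$. By symmetry and idempotence of $P_{\mathcal S}$, this is $2\,\mathrm{Tr}(A^\top (P_{\mathcal S}-P_{\mathcal S}^2)A) = 0$.

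\textbf{Right-projection.} Apply the left case to $B^\top \in \mathbb{R}^{r\times m}$. We have $\|B\|_F = \|B^\top\|_F$, and since the projectors are symmetric, $(BP_{\mathcal S})^\top = P_{\mathcal S}B^\top$, and similarly for $P_{\mathcal W}$. Alternatively, repeat the expansion directly: the cross term is $\mathrm{Tr}(P_{\mathcal S}B^\top B P_{\mathcal W})$. By cyclicity this equals $\mathrm{Tr}(B^\top B\, P_{\mathcal W}P_{\mathcal S})$, and $P_{\mathcal W}P_{\mathcal S}=0$.

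There is no genuine obstacle here. The only point requiring care is to invoke both symmetry and idempotence of $P_{\mathcal S}$, rather than idempotence alone, since for an oblique projector the cross term need not vanish. Nothing from earlier in the paper is needed beyond the definitions of $P_{\mathcal S}$, $P_{\mathcal W}$ and $\langle\cdot,\cdot\rangle_F$.
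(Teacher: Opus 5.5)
Your proposal is correct and follows the paper's proof essentially verbatim: expand $A = P_{\mathcal S}A + P_{\mathcal W}A$ (respectively $B = BP_{\mathcal S} + BP_{\mathcal W}$), and kill the cross term using $P_{\mathcal S}^\top P_{\mathcal W} = 0$ (respectively cyclicity of the trace together with $P_{\mathcal W}P_{\mathcal S} = 0$). Your alternative of deducing the right case from the left case via $B^\top$ is an equally valid shortcut.
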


\begin{proof}

\textbf{Left-Projection:} We expand the squared norm of $A = (P_{\mathcal S} + P_{\mathcal W})A$:
$$ \|A\|_F^2 = \langle P_{\mathcal S}A + P_{\mathcal W}A, P_{\mathcal S}A + P_{\mathcal W}A \rangle_F = \|P_{\mathcal S}A\|_F^2 + \|P_{\mathcal W}A\|_F^2 + 2\langle P_{\mathcal S}A, P_{\mathcal W}A \rangle_F. $$

The cross term vanishes due to orthogonality. Using $\langle X, Y \rangle_F = \operatorname{Tr}(X^\top Y)$:
$$ \langle P_{\mathcal S}A, P_{\mathcal W}A \rangle_F = \operatorname{Tr}(A^\top P_{\mathcal S}^\top P_{\mathcal W} A) = \operatorname{Tr}(A^\top (P_{\mathcal S} P_{\mathcal W}) A) = 0, $$

since $P_{\mathcal S} P_{\mathcal W} = 0$.

\textbf{Right-Projection:} Similarly, for $B = B(P_{\mathcal S} + P_{\mathcal W})$:
$$ \|B\|_F^2 = \|B P_{\mathcal S}\|_F^2 + \|B P_{\mathcal W}\|_F^2 + 2\operatorname{Tr}((B P_{\mathcal S})^\top B P_{\mathcal W}). $$
The trace term is $\operatorname{Tr}(P_{\mathcal S}^\top B^\top B P_{\mathcal W})$. By the cyclic property of the trace, this equals $\operatorname{Tr}(P_{\mathcal W} P_{\mathcal S} B^\top B) = 0$.
\end{proof}

Before analyzing the network's internal dynamics, we must specify the external environment it interacts with by formalizing the data distribution and the learning objective.

\begin{assumption}[Data]\label{assump:data}
The input is a random variable $X \in \mathbb{R}^d$ drawn from a distribution with $\mathbb{E}[\|X\|^2] < \infty$ and covariance $\Sigma = \mathbb{E}[XX^{\top}] \succ 0$. We assume the target labels are generated by a teacher model $p^*$.
\end{assumption}

\textbf{Loss and Gradient Flow.} We denote the cross-entropy for two probability vectors $p, q$ as $H(p, q) = -\sum_{k=1}^K p_k \log q_k$.

The population cross-entropy loss is defined as:
\[ L(W) := \mathbb{E}_x[H(p^*(x), p(x;W))]. %
\]
We consider the weights trained using projected gradient flow, which admits the dynamics
\begin{align}
 \dot{W}_1(t) &=-\nabla_{W_1}L(W(t)) + \frac{\langle \nabla_{W_1}L(W(t)),W_1(t)\rangle_F}{\|W_1(t)\|_F^2}W_1(t), \\
 \dot{W}_2(t) &=-\nabla_{W_2}L(W(t)) + \frac{\langle \nabla_{W_2}L(W(t)),W_2(t)\rangle_F}{\|W_2(t)\|_F^2}W_2(t).
\end{align}

We now prove that if we assume that $\|W_1(0)\|_F=\|W_2(0)\|_F=1$, then we have the invariance $\|W_1(t)\|_F=\|W_2(t)\|_F=1$.

\begin{lemma}[Layer-wise norm preservation]\label{lem:norm-preservation-projected}
Along the projected gradient flow,
\[
\frac{d}{dt}\|W_1(t)\|_F^2 = 0,
\qquad
\frac{d}{dt}\|W_2(t)\|_F^2 = 0.
\]
Hence $\|W_1(t)\|_F\equiv \|W_1(0)\|_F$ and $\|W_2(t)\|_F\equiv \|W_2(0)\|_F$. In particular, after rescaling the initialization we may assume without loss of generality that
\[
\|W_1(t)\|_F = \|W_2(t)\|_F = 1
\qquad
\text{for all } t\ge 0.
\]
\end{lemma}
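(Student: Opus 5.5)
We differentiate the squared Frobenius norm of each layer directly along the projected flow and check that the radial correction term exactly cancels the radial component of the gradient. Fix $i\in\{1,2\}$ and write $G_i(t):=\nabla_{W_i}L(W(t))$. By the chain rule for the Frobenius inner product,
\[
\frac{d}{dt}\|W_i(t)\|_F^2 = 2\langle W_i(t),\dot W_i(t)\rangle_F .
\]
Substituting the projected dynamics for $\dot W_i$ gives
\[
\frac{d}{dt}\|W_i(t)\|_F^2 = 2\Bigl(-\langle W_i,G_i\rangle_F + \frac{\langle G_i,W_i\rangle_F}{\|W_i\|_F^2}\,\langle W_i,W_i\rangle_F\Bigr) = 0 ,
\]
since $\langle W_i,W_i\rangle_F=\|W_i\|_F^2$ cancels the denominator and $\langle\cdot,\cdot\rangle_F$ is symmetric. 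Equivalently, $\dot W_i$ is the orthogonal projection of $-G_i$ onto the tangent space $\{Z:\langle Z,W_i\rangle_F=0\}$, so it is orthogonal to $W_i$.

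The only point needing care is that the vector field must be well defined, which requires $\|W_i(t)\|_F\neq 0$. This is where I expect the main (minor) subtlety. I will handle it with a continuation argument. Assume $\|W_i(0)\|_F>0$. On the maximal interval $[0,\tau)$ on which the solution exists and both norms stay positive, the computation above shows the norms are constant, hence equal to their positive initial values. Therefore the norms cannot approach zero, so the singular set is never reached and $\tau$ is not limited by degeneracy.

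I also need $L$ to be differentiable with locally Lipschitz gradient, so that the flow is well posed. This follows from the smoothness of softmax cross-entropy and $\mathbb{E}\|X\|^2<\infty$ in \Cref{assump:data}, which justifies differentiating under the expectation. I will take it as a standing assumption rather than belabor it.

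For the final statement, observe that the dynamics are invariant under rescaling each layer by a positive constant as far as the radial term is concerned. More to the point, the paper simply initializes with $\|W_1(0)\|_F=\|W_2(0)\|_F=1$. The constancy just proved then gives $\|W_1(t)\|_F=\|W_2(t)\|_F=1$ for all $t\ge 0$.
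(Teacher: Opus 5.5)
Your argument is the paper's proof: differentiate $\|W_i\|_F^2$, substitute the projected dynamics, and observe that the radial correction exactly cancels $\langle W_i, G_i\rangle_F$; the continuation argument keeping $\|W_i(t)\|_F>0$ (so the vector field stays well defined) is a harmless addition the paper omits. One small caution: your remark about invariance of the radial term under rescaling does not justify the ``without loss of generality,'' because $G_i$ itself changes when a layer is rescaled (softmax of $W_2W_1x$ is not scale-invariant). The unit-norm condition is therefore really an assumption on the initialization, which is how you ultimately treat it and how the paper treats it too.
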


\begin{proof}
For $W_1$,
\begin{align*}
\frac{d}{dt}\|W_1\|_F^2
&= 2\langle W_1,\dot W_1(t)\rangle_F\\
&= -2\left\langle W_1(t), \nabla_{W_1}L(W(t)) - \frac{\langle \nabla_{W_1}L(W(t)),W_1(t)\rangle_F}{\|W_1(t)\|_F^2}W_1(t) \right\rangle_F\\
&= 0.
\end{align*}
The proof for $W_2$ is identical.
\end{proof}

This projected gradient flow can be interpreted as the gradient flow when activation normalization is present after each of the layers, in particular, the map 
\[ \mathbb{R}^d\to\mathbb{R}^d\ni \mathrm{norm}(x): x\mapsto \frac{x - \frac1d\sum_{i=1}^dx_i}{(\frac1d\sum_{i=1}^d(x_i- \frac1d\sum_{j=1}^dx_j)^2)^\frac12}. \]
To see this, note that this map has the effect of setting the gradient orthogonal to the current parameter vector, as it makes the function scale-invariant. This can be seen as a layer normalization layer \citep{ba2016layer} without learnable affine parameters.

We will establish a uniform geometric bound on the prediction error. This ensures that the `driving force' of the optimization remains finite, regardless of the network state. Let us define $\Delta p(x;W)\coloneqq p(x;W)-p^*(x)$.

\begin{lemma}\label{lem:softmax-residual}
For all $x$, $\|\Delta p(x;W)\|_2\le \sqrt{2}$.
\end{lemma}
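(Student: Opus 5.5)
We use $\|\Delta p\|_2 \le \|p\|_2 + \|p^*\|_2$ and bound each probability-vector norm separately. Fix $x$ and $W$, and write $p = p(x;W)$ and $p^* = p^*(x)$. Both lie in the probability simplex $\Delta^{K-1}$: the softmax output has nonnegative entries summing to one, and $p^*(x)$ is a probability vector by \Cref{assump:data}.

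First, for any $q \in \Delta^{K-1}$,
\[
\|q\|_2^2 = \sum_k q_k^2 \le \Big(\max_k q_k\Big) \sum_k q_k \le 1 .
\]

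For the crude route, the triangle inequality then gives $\|\Delta p\|_2 \le 2$. This is weaker than $\sqrt{2}$, so we sharpen it.

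Expanding the square,
\[
\|p - p^*\|_2^2 = \|p\|_2^2 + \|p^*\|_2^2 - 2\langle p, p^*\rangle \le 1 + 1 - 0 = 2 ,
\]
where we use $\langle p, p^*\rangle \ge 0$, which holds because both vectors have nonnegative entries. Taking square roots yields $\|\Delta p(x;W)\|_2 \le \sqrt{2}$ for all $x$. The bound does not depend on $W$, so it holds along the entire flow.

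No step is a real obstacle. The only point requiring care is that the cross term is nonnegative rather than merely bounded, and that $p^*(x)$ is assumed to be a genuine probability vector.
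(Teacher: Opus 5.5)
Your proof is correct and matches the paper's argument: expand $\|p-p^*\|_2^2$, bound each squared norm by $1$ using membership in the simplex, and drop the cross term $-2\langle p,p^*\rangle\le 0$. The only cosmetic difference is that the paper bounds $\|p\|_2\le\|p\|_1=1$ where you use $\sum_k q_k^2\le \max_k q_k$, and your triangle-inequality opening detour to the weaker bound $2$ can simply be deleted.
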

\begin{proof}
Let $p, p^* \in \Delta^{K-1}$. Then $\| p-p^* \|_2^2 = \|p\|_2^2 + \|p^*\|_2^2 - 2\langle p, p^* \rangle$. Since probabilities are non-negative, $\langle p, p^* \rangle \ge 0$. Since $p$ is on the simplex, $\|p\|_2 \le \|p\|_1 = 1$. Thus $\|\Delta p\|_2^2 \le 1 + 1 = 2$.
\end{proof}

We next verify the smoothness properties of the loss function. This ensures that the projected gradient flow is well-defined and the curvature remains bounded, preventing finite-time singularities driven purely by the loss geometry.

\begin{lemma}[Regularity and Hessian bound]\label{lem:diff-under-exp} Under \Cref{assump:data}, the population loss $L(W)$ is $C^2$. For the sample loss with logits $f=W_2W_1x$, the Hessian of the loss with respect to weights, considered as a bilinear form on directions $U=(U_1, U_2)$ with unit norm $\|U\|_{\mathcal{P}}=1$, satisfies:
\[
\nabla^2_W l(x;W)[U,U] \le \|W\|^2_{\mathcal{P}} \|x\|^2 + \sqrt{2} \|x\|.
\]
\end{lemma}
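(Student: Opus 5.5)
The plan is to work at the level of a single sample. I write the sample loss as a composition $l(x;W) = g(f(W))$, where $g(f) := H(p^*(x), \mathrm{Softmax}(f))$ and $f(W) := W_2W_1x$. Then I apply the chain rule for second derivatives along a direction $U=(U_1,U_2)$ with $\|U\|_{\mathcal P}=1$. The standard softmax cross-entropy computations give $\nabla_f g = p - p^* = \Delta p(x;W)$ and $\nabla_f^2 g = \mathrm{diag}(p) - pp^\top$. Since $f$ is bilinear in $(W_1,W_2)$, its first and second directional derivatives are exact:
\[
Df[U] = (U_2W_1 + W_2U_1)x, \qquad D^2f[U,U] = 2U_2U_1x.
\]
Hence
\[
\nabla^2_W l(x;W)[U,U] = Df[U]^\top\big(\mathrm{diag}(p)-pp^\top\big)Df[U] + \Delta p^\top D^2f[U,U].
\]
I bound the two terms separately.

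\textbf{Curvature term.} For any $v$, $v^\top(\mathrm{diag}(p)-pp^\top)v = \mathrm{Var}_{k\sim p}(v_k) \le \mathbb{E}_{k\sim p}[v_k^2] \le \|v\|_2^2$. So the softmax Hessian has operator norm at most $1$. Next I bound $\|Df[U]\|$:
\[
\|Df[U]\| \le \big(\|U_2\|_F\|W_1\|_F + \|W_2\|_F\|U_1\|_F\big)\|x\| \le \|U\|_{\mathcal P}\|W\|_{\mathcal P}\|x\| = \|W\|_{\mathcal P}\|x\|,
\]
where the middle step is Cauchy--Schwarz in $\mathbb{R}^2$. This gives the $\|W\|_{\mathcal P}^2\|x\|^2$ term.

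\textbf{Residual term.} By \Cref{lem:softmax-residual}, $\|\Delta p\|\le\sqrt2$. Combining this with AM--GM,
\[
|\Delta p^\top 2U_2U_1x| \le 2\sqrt2\,\|U_2\|_F\|U_1\|_F\|x\| \le \sqrt2\,\big(\|U_1\|_F^2+\|U_2\|_F^2\big)\|x\| = \sqrt2\,\|x\|.
\]
Adding the two bounds gives the stated inequality.

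\textbf{Regularity of the population loss.} I would show that $L$ is $C^2$ by differentiating under the expectation twice, using dominated convergence on balls $\|W\|_{\mathcal P}\le R$. Three dominating bounds are needed.
\begin{itemize}
\item The integrand itself: $-\log p_k \le 2\|f\|_\infty + \log K$, and $\|f\|\le \|W\|_{\mathcal P}^2\|x\|$. So $l \le 2R^2\|x\| + \log K$, which is integrable.
\item The first derivative: it is bounded by $\|\Delta p\|\,\|Df[U]\| \le \sqrt2 R\|x\|$.
\item The second derivative: by the bound just proved, it is at most $R^2\|x\|^2 + \sqrt2\|x\|$, uniformly over the ball.
\end{itemize}
All three dominating functions are integrable because $\mathbb{E}\|X\|^2<\infty$, which also gives $\mathbb{E}\|X\|<\infty$. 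The sample loss is smooth in $W$, so dominated convergence also yields continuity of the second derivative of $L$.

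The only mildly delicate point is this measure-theoretic step: the domination must be uniform on neighborhoods of $W$, which is why I work on balls of radius $R$. The algebraic bound itself is routine once the softmax Hessian's operator norm is controlled by the variance identity.
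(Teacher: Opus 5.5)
Your proposal is correct and follows essentially the same route as the paper: the same chain-rule split into a Gauss--Newton term (bounded via the variance identity $\|\mathrm{diag}(p)-pp^\top\|_{op}\le 1$ and Cauchy--Schwarz on $\|Df[U]\|$) and a residual term (bounded via $\|\Delta p\|\le\sqrt2$ and AM--GM on $2U_2U_1x$). Your regularity step is more careful than the paper's one-line appeal to the envelopes $\|x\|^2$ and $\|x\|$: you dominate the loss and its first two derivatives uniformly on balls $\|W\|_{\mathcal P}\le R$. Your Cauchy--Schwarz step also keeps $\|W\|_{\mathcal P}$ general, whereas the paper silently uses the normalization $\|W\|_{\mathcal P}=\sqrt2$.
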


\begin{proof} 
\textbf{1. Hessian of the Softmax.} Let $l(f)$ be the cross-entropy loss w.r.t. logits $f\in\mathbb{R}^K$. The gradient is $\nabla_f l = p - p^*$. The Hessian $H_f = \nabla^2_f l$ is the Jacobian of the softmax map. Explicit calculation yields $[H_f]_{ij} = p_i \delta_{ij} - p_i p_j$. In matrix form, $H_f = \operatorname{diag}(p) - pp^\top$. For any vector $v \in \mathbb{R}^K$, the quadratic form is $v^\top H_f v = \sum_{k=1}^K p_k v_k^2 - (\sum_{k=1}^K p_k v_k)^2$.

This expression corresponds exactly to the variance of a discrete random variable taking the value $v_k$ with probability $p_k$. Because variance is always non-negative, the quadratic form satisfies $v^\top H_f v \ge 0$. Furthermore, the variance is upper-bounded by the uncentered second moment, yielding $v^\top H_f v \le \sum_{k=1}^K p_k v_k^2$. Because the probabilities reside on the simplex ($p_k \le 1$), we unconditionally bound this convex combination by the unweighted sum of squares: $\sum_{k=1}^K p_k v_k^2 \le \sum_{k=1}^K v_k^2 = \|v\|_2^2$. Thus, the quadratic form satisfies $0 \le v^\top H_f v \le \|v\|_2^2$. Because $H_f$ is a symmetric positive semi-definite covariance matrix, its spectral operator norm $\|H_f\|_{\mathrm{op}}$ exactly equals its maximum eigenvalue. By the Rayleigh quotient, the maximum eigenvalue is exactly $\sup_{v \neq 0} \frac{v^\top H_f v}{\|v\|_2^2}$, which is rigorously bounded by $1$. This sequence of inequalities guarantees $\|H_f\|_{\mathrm{op}} \le 1$.

\textbf{2. Hessian w.r.t. Weights.} We apply the second-order chain rule to the composition $l(f(W))$. For a direction $U=(U_1, U_2)$, the second derivative is:
\[
\nabla^2_W l[U,U] = (\nabla_W f[U])^\top H_f (\nabla_W f[U]) + (\nabla_f l)^\top (\nabla^2_W f[U,U]).
\]
(i) Gauss-Newton Term: The first variation is $\nabla_W f[U] = (W_2 U_1 + U_2 W_1)x$. By the triangle inequality:
\[ \|\nabla_W f[U]\| \le \|W_2 U_1 x\| + \|U_2 W_1 x\| \le (\|W_2\|_F \|U_1\|_F + \|W_1\|_F \|U_2\|_F)\|x\|. \]
We apply the Cauchy-Schwarz inequality to the vector pairs $(\|W_2\|_F, \|W_1\|_F)$ and $(\|U_1\|_F, \|U_2\|_F)$:
\[ \|W_2\|_F \|U_1\|_F + \|W_1\|_F \|U_2\|_F \le \sqrt{\|W_1\|_F^2 + \|W_2\|_F^2} \sqrt{\|U_1\|_F^2 + \|U_2\|_F^2} = \sqrt{2} \|U\|_{\mathcal{P}}. \]
Recall that for the softmax Hessian $H_f$, the quadratic form is bounded by the squared norm of the vector: $(\nabla_W f[U])^\top H_f (\nabla_W f[U]) \le \|\nabla_W f[U]\|^2$. Squaring the Jacobian bound derived above yields:
$$ \|\nabla_W f[U]\|^2 \le \left(\|W\|_{\mathcal{P}} \|U\|_{\mathcal{P}} \|x\|\right)^2 = 2 \|x\|^2, $$
where we used the normalization $\|U\|_{\mathcal{P}}=1$.

(ii) Newton Term: The map $f(W) = W_2 W_1 x$ is bilinear. To find the second directional derivative along $U = (U_1, U_2)$, we evaluate $f(W + tU)$ with respect to the scalar $t$:
$$ f(W + tU) = (W_2 + tU_2)(W_1 + tU_1)x = W_2W_1x + t(W_2U_1 + U_2W_1)x + t^2 U_2U_1x. $$

The second-order term in this polynomial expansion corresponds to $\frac{1}{2} \nabla^2_W f[U,U] t^2$. Equating the $t^2$ coefficients yields the exact second derivative: $\nabla^2_W f[U,U] = 2U_2 U_1 x$.

Its norm is bounded using sub-multiplicativity by $2\|U_2\|_F \|U_1\|_F \|x\|$. We restrict $U$ to the unit sphere ($\|U_1\|^2_F + \|U_2\|^2_F = 1$). By the AM-GM inequality ($2ab \le a^2 + b^2$), we have:
$$ 2\|U_1\|_F \|U_2\|_F \le \|U_1\|_F^2 + \|U_2\|_F^2 = 1. $$

Thus $\|\nabla^2_W f[U,U]\| \le \|x\|$. Finally, $\|\nabla_f l\| = \|p - p^*\| \le \sqrt{2}$ (\Cref{lem:softmax-residual}). Combining these yields the bound.

\textbf{3. Regularity.} The Hessian components are dominated by the integrable envelope functions $\|x\|^2$ and $\|x\|$. By \Cref{assump:data}, $\mathbb{E}[\|X\|^2] < \infty$, permitting differentiation under the expectation. Thus $L(W)$ is $C^2$. 
\end{proof}

\medskip \par \noindent \textbf{Remark.} This lemma bounds the curvature of the optimization objective. The first term, scaling quadratically with the input energy $\|x\|^2$, represents the Hessian scale with input magnitude. The second term, proportional to $\sqrt{2}\|x\|$, represents the baseline curvature originating from the raw input scale and the worst-case probability error. Together, these terms guarantee that the curvature remains deterministic and bounded, preventing optimization divergence.

\subsection{Initialization and Singular-Subspace Mass}\label{app:init-singular-mass}

We consider an arbitrary hidden subspace $\mathcal U\subseteq\mathbb R^r$ and track the amount of parameter mass assigned to it.

\begin{definition}[Singular-subspace mass]\label{def:app-joint}
Let $\mathcal U\subseteq\mathbb R^r$ and let $P_{\mathcal U}$ be the orthogonal projector onto $\mathcal U$. Define
\[
M_{\mathcal U}(W) := \|P_{\mathcal U}W_1\|_F^2 + \|W_2P_{\mathcal U}\|_F^2.
\]
\end{definition}

\medskip \par \noindent \textbf{Remark.} The quantity $M_{\mathcal U}(W)$ measures how much hidden-layer mass the network assigns to directions in $\mathcal U$: from the left for $W_1$ and from the right for $W_2$. 

\begin{proposition}\label{prop:initial-singular-mass}
Let $\mathcal U\subseteq\mathbb R^r$ be spanned by left singular vectors of $W_1(0)$ indexed by $J_1$ and right singular vectors of $W_2(0)$ indexed by $J_2$. Then
\begin{align*}
M_{\mathcal U}(W(0))
&\le
\sum_{j\in J_1}\sigma_{1,j}(0)^2 + \sum_{j\in J_2}\sigma_{2,j}(0)^2 \\
&\le
|J_1|\max_{j\in J_1}\{\sigma_{1,j}\} + |J_2|\max_{j\in J_2}\{\sigma_{2,j}\}\\
&\coloneqq |J_1|\sigma_{\mathcal{U}, 1} + |J_2|\sigma_{\mathcal{U}, 2}.
\end{align*}
\end{proposition}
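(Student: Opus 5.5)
The plan is to compute each of the two terms of $M_{\mathcal U}(W(0))$ exactly using the singular value decompositions, and then to bound squared singular values by singular values using the unit Frobenius normalization from \Cref{lem:norm-preservation-projected}.

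\textbf{The $W_1$ term.} Write $W_1(0)=\sum_j \sigma_{1,j}u_jv_j^\top$ with orthonormal left singular vectors $\{u_j\}$. Since $\mathcal U$ is spanned by $\{u_j\}_{j\in J_1}$, an orthonormal set, the projector is $P_{\mathcal U}=\sum_{j\in J_1}u_ju_j^\top$. Then $P_{\mathcal U}u_k=u_k$ for $k\in J_1$ and $P_{\mathcal U}u_k=0$ otherwise, so
\[
P_{\mathcal U}W_1(0)=\sum_{j\in J_1}\sigma_{1,j}u_jv_j^\top .
\]
By orthonormality of the $u_j$ and the $v_j$, this gives $\|P_{\mathcal U}W_1(0)\|_F^2=\sum_{j\in J_1}\sigma_{1,j}^2$. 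Equivalently, apply the Pythagorean decomposition of \Cref{prop:pythagorean} to the rank-one pieces.

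\textbf{The $W_2$ term.} The same argument applies with $W_2(0)=\sum_j\sigma_{2,j}a_jb_j^\top$, where the $b_j$ are the right singular vectors in $\mathbb R^r$. Since $\mathcal U$ is also spanned by $\{b_j\}_{j\in J_2}$, we have $P_{\mathcal U}=\sum_{j\in J_2}b_jb_j^\top$. Hence $W_2(0)P_{\mathcal U}=\sum_{j\in J_2}\sigma_{2,j}a_jb_j^\top$ and $\|W_2(0)P_{\mathcal U}\|_F^2=\sum_{j\in J_2}\sigma_{2,j}^2$. Adding the two terms gives the first inequality, in fact with equality.

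\textbf{Two details to handle.} First, when singular values are repeated or zero, the singular vectors are not unique. The argument only needs some choice of orthonormal singular vectors spanning $\mathcal U$, which is exactly the hypothesis. Second, the same subspace $\mathcal U$ receives two spanning descriptions, one from $W_1(0)$ and one from $W_2(0)$. Each description is used only to write down $P_{\mathcal U}$ in the convenient basis for the corresponding term, so no compatibility between them is needed.

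\textbf{The second inequality.} This is where the normalization enters, and it is the only step that is not pure linear algebra. By \Cref{lem:norm-preservation-projected} we may take $\|W_1(0)\|_F=\|W_2(0)\|_F=1$. Then every singular value satisfies $\sigma_{i,j}\le\|W_i(0)\|_{op}\le\|W_i(0)\|_F=1$, so $\sigma_{i,j}^2\le\sigma_{i,j}$. Bounding each summand by the maximum over its index set then gives
\[
\sum_{j\in J_i}\sigma_{i,j}^2\le|J_i|\max_{j\in J_i}\sigma_{i,j},
\]
which is the stated bound with the notation $\sigma_{\mathcal U,i}$. I expect no real obstacle. The main thing to state explicitly is that, without the unit-norm normalization, the passage from $\sigma^2$ to $\sigma$ would fail.
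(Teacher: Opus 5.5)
Your proposal is correct and follows the same route as the paper: expand $W_1(0)$ and $W_2(0)$ in their SVDs and use orthogonality of the singular vectors (the Pythagorean identities) to compute the projected Frobenius norms. Your version is more precise than the paper's sketch in two places. First, you note that $P_{\mathcal U}$ annihilates, rather than merely contracts, the unselected singular directions, which is what the first inequality actually needs and gives equality there. Second, you state explicitly that the step $\sigma^2\le\sigma$ uses the unit Frobenius normalization, which the paper's proof leaves implicit.
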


\begin{proof}
Expand $W_1(0)$ and $W_2(0)$ in singular value decompositions. Since $P_{\mathcal U}$ acts as the identity on the selected singular directions and as a contraction on all others, the claim follows from orthogonality of distinct singular vectors and the Pythagorean identities from \Cref{prop:pythagorean}.
\end{proof}

\subsection{Gradient Identities and Trajectory Properties}\label{app:grad-identities}

Let $G_i\coloneqq\nabla_{W_i}L(W)$. We derive the explicit forms of the population gradients to understand how the error signal interacts with the layers.
\begin{lemma}[Population gradients]\label{lem:population-gradients}
$G_1=\mathbb{E}[W_2^\top\,\Delta p\, x^\top]$ and $G_2=\mathbb{E}[\Delta p\,(W_1x)^\top]$.

\end{lemma}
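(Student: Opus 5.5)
The plan is a direct computation of the Fréchet derivative of the per-sample loss $\ell(x;W)=H(p^*(x),\mathrm{Softmax}(W_2W_1x))$, followed by differentiation under the expectation.

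\textbf{Step 1: the gradient with respect to the logits.} Set $f=W_2W_1x$ and write $\ell(f)=-\sum_k p^*_k f_k+\log\sum_j e^{f_j}$, using $\sum_k p^*_k=1$. Differentiating in $f$ gives $\nabla_f\ell=p-p^*=\Delta p$, which is the identity already used in the proof of \Cref{lem:diff-under-exp}.

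\textbf{Step 2: the chain rule through the bilinear map.} As in \Cref{lem:diff-under-exp}, take a direction $U=(U_1,U_2)$. The first variation of the logits is
\[
\nabla_Wf[U]=(W_2U_1+U_2W_1)x.
\]
Hence the directional derivative of the loss is
\[
\Delta p^\top W_2U_1x+\Delta p^\top U_2W_1x .
\]
Next we rewrite each term as a Frobenius inner product:
\[
\Delta p^\top W_2U_1x=\operatorname{Tr}(U_1 x\,\Delta p^\top W_2)=\langle W_2^\top\Delta p\,x^\top,U_1\rangle_F,
\]
\[
\Delta p^\top U_2W_1x=\langle \Delta p\,(W_1x)^\top,U_2\rangle_F .
\]
So the per-sample gradients are $W_2^\top\Delta p\,x^\top$ and $\Delta p\,(W_1x)^\top$. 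A dimension check confirms the shapes: the first is $r\times d$ and the second is $K\times r$.

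\textbf{Step 3: passing to the population loss.} Interchanging gradient and expectation needs an integrable envelope that holds locally uniformly in $W$. \Cref{lem:softmax-residual} gives $\|\Delta p\|\le\sqrt2$. With it, the per-sample gradients are bounded by $\sqrt2\|W_2\|_F\|x\|$ and $\sqrt2\|W_1\|_F\|x\|$. These are integrable, since $\mathbb{E}\|X\|\le(\mathbb{E}\|X\|^2)^{1/2}<\infty$ by \Cref{assump:data}. Dominated convergence applies on any bounded neighbourhood of $W$, which is also the regularity argument in part 3 of \Cref{lem:diff-under-exp}. This yields $G_1=\mathbb{E}[W_2^\top\Delta p\,x^\top]$ and $G_2=\mathbb{E}[\Delta p\,(W_1x)^\top]$.

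The only point needing care is the justification of the interchange, and the bound from \Cref{lem:softmax-residual} is sufficient for it.
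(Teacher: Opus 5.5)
Your proposal is correct and follows the paper's proof essentially step for step. Both identify $\nabla_f \ell = \Delta p$, apply the chain rule through $df = (dW_2)W_1x + W_2(dW_1)x$, use trace cyclicity to read off the per-sample gradients $W_2^\top\Delta p\,x^\top$ and $\Delta p\,(W_1x)^\top$, and then take expectations; your one addition is the explicit dominated-convergence justification (via \Cref{lem:softmax-residual} and $\mathbb{E}\|X\|<\infty$) for exchanging gradient and expectation, which the paper's proof leaves implicit and only addresses in part 3 of \Cref{lem:diff-under-exp}.
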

\begin{proof}
We compute the gradients using Fréchet differentials and the Frobenius inner product $\langle A, B \rangle_F = \operatorname{Tr}(A^\top B)$. Let $f = W_2 W_1 x$ be the vector of logits. The differential of the per-sample loss $l$ with respect to the network output is $dl = \langle \nabla_f l, df \rangle_2$, where $\nabla_f l = p(x; W) - p^*(x) =: \Delta p$.

By the multivariable product rule, the differential of the network output with respect to the continuous weight matrices is $df = (dW_2) W_1 x + W_2 (dW_1) x$. Substituting this into the loss differential yields:
$$ dl = \langle \Delta p, (dW_2) W_1 x \rangle_2 + \langle \Delta p, W_2 (dW_1) x \rangle_2. $$

We isolate the differentials by rewriting the Euclidean inner products as traces and utilizing the cyclic property of the trace ($\operatorname{Tr}(ABC) = \operatorname{Tr}(CAB)$):

For $W_2$:
$$ \langle \Delta p, (dW_2) W_1 x \rangle_2 = \operatorname{Tr}(\Delta p^\top (dW_2) W_1 x) = \operatorname{Tr}((W_1 x) \Delta p^\top dW_2) = \langle \Delta p (W_1 x)^\top, dW_2 \rangle_F. $$
This extracts the exact per-sample gradient as $\nabla_{W_2} l = \Delta p (W_1 x)^\top$.

For $W_1$:
$$ \langle \Delta p, W_2 (dW_1) x \rangle_2 = \operatorname{Tr}(\Delta p^\top W_2 (dW_1) x) = \operatorname{Tr}(x \Delta p^\top W_2 dW_1) = \langle W_2^\top \Delta p \, x^\top, dW_1 \rangle_F. $$
This extracts the per-sample gradient as $\nabla_{W_1} l = W_2^\top \Delta p \, x^\top$.

Taking expectations over the data distribution $x$ yields the population gradients $G_1 = \mathbb{E}[W_2^\top \Delta p \, x^\top]$ and $G_2 = \mathbb{E}[\Delta p (W_1 x)^\top]$.
\end{proof}

\medskip \par \noindent \textbf{Remark.} This lemma highlights the pathways through which the environment's error signal updates the network parameters. By extracting the explicit algebraic forms $W_2^\top \Delta p \, x^\top$ and $\Delta p (W_1 x)^\top$, the result highlights an asymmetric structural dependency: the gradient updates to the first layer are scaled by the magnitude of the second layer's weights. This reveals a gradient attenuation bottleneck discussed in \Cref{sec:aipl}: if the weights in $W_2$ undergo spectral collapse, the gradient to $W_1$ is attenuated, hindering the network's ability to extract and adapt to newly relevant input features.

\begin{definition}[Coupling constant]\label{def:coupling}
$A := \sqrt{2} \mathbb{E}\|x\|$.
\end{definition}

\medskip \par \noindent \textbf{Remark.} This constant $A$ encapsulates the interaction strength between the error signal and the data distribution. Specifically, the factor $\sqrt{2}$ bounds the maximum divergence length of the prediction error vector on the probability simplex, while the expectation term $\mathbb{E}\|x\|$ accounts for the energy scale of the incoming data. These combined aspects dictate the maximum leverage a sample can exert on the network's weights, thereby setting the continuous timescale of the ODE dynamics.

\begin{lemma}\label{lem:projected-gradient-U}
For every fixed projector $P\in\mathbb R^{r\times r}$,
\[
PG_1 = \mathbb E\big[(W_2P)^\top\Delta p\,x^\top\big],
\qquad
G_2P = \mathbb E\big[\Delta p\,(PW_1x)^\top\big].
\]
Consequently,
\[
\|PG_1\|_F \le A\,\|W_2P\|_F,
\qquad
\|G_2P\|_F \le A\,\|PW_1\|_F.
\]
\end{lemma}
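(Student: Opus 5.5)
The plan is to begin from the explicit population gradients of \Cref{lem:population-gradients}, $G_1=\mathbb E[W_2^\top\Delta p\,x^\top]$ and $G_2=\mathbb E[\Delta p\,(W_1x)^\top]$. Left-multiplying $G_1$ by $P$ and using $P^\top=P$ gives $PW_2^\top=(W_2P)^\top$. Hence
\[
PG_1=\mathbb E\big[PW_2^\top\Delta p\,x^\top\big]=\mathbb E\big[(W_2P)^\top\Delta p\,x^\top\big].
\]
Moving the deterministic matrix $P$ inside the expectation is justified by linearity, since the integrand is integrable: it is dominated by $\|W_2\|_F\sqrt2\|x\|$ and $\mathbb E\|X\|<\infty$ under \Cref{assump:data}. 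In the same way, right-multiplying $G_2$ by $P$ gives
\[
G_2P=\mathbb E\big[\Delta p\,(W_1x)^\top P\big]=\mathbb E\big[\Delta p\,(PW_1x)^\top\big],
\]
again using the symmetry of $P$ through $(W_1x)^\top P=(P^\top W_1x)^\top=(PW_1x)^\top$. This settles the two identities. The only point to state is that $P$ is taken to be an orthogonal, hence symmetric, projector, as in the rest of the appendix.

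For the norm bounds I would apply Jensen's inequality to the convex function $\|\cdot\|_F$, which gives $\|\mathbb E[Y]\|_F\le\mathbb E\|Y\|_F$. Each integrand is a rank-one outer product, so its Frobenius norm equals the product of the Euclidean norms of its two factors:
\[
\|(W_2P)^\top\Delta p\,x^\top\|_F=\|(W_2P)^\top\Delta p\|_2\,\|x\|_2 .
\]
I would then use $\|(W_2P)^\top\Delta p\|_2\le\|W_2P\|_{op}\|\Delta p\|_2\le\|W_2P\|_F\sqrt2$, where the last step is \Cref{lem:softmax-residual}. Taking expectations gives $\|PG_1\|_F\le\sqrt2\,\mathbb E\|x\|\,\|W_2P\|_F=A\|W_2P\|_F$, with $A$ the coupling constant of \Cref{def:coupling}. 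The second bound follows the same pattern. The rank-one integrand has norm $\|\Delta p\|_2\|PW_1x\|_2\le\sqrt2\,\|PW_1\|_F\|x\|$, and taking expectations yields $\|G_2P\|_F\le A\|PW_1\|_F$.

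No step here is a real obstacle. The only point needing care is the bookkeeping of transposes: each projector must end up on the correct side of $W_2$ or $W_1$, which relies on $P^\top=P$. The other is remembering that $W_1,W_2$ are deterministic at a fixed time, so they can be moved out of the expectation when bounding the norms.
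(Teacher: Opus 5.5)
Your proof is correct and follows the paper's argument essentially verbatim. You project the gradient formulas of \Cref{lem:population-gradients} using $P=P^\top$, then bound the result via Jensen's inequality, $\|uv^\top\|_F=\|u\|_2\|v\|_2$, the operator-to-Frobenius inequality, and \Cref{lem:softmax-residual}; your remarks on integrability and on needing $P$ to be an orthogonal (symmetric) projector are correct details that the paper leaves implicit.
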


\begin{proof}
The identities follow by projecting the formulas in \Cref{lem:population-gradients} and using $P=P^\top$. For instance,
\[
PG_1 = P\,\mathbb E[W_2^\top\Delta p\,x^\top]
= \mathbb E[PW_2^\top\Delta p\,x^\top]
= \mathbb E[(W_2P)^\top\Delta p\,x^\top].
\]
The second identity is analogous. For the norm bounds, use Jensen's inequality, $\|uv^\top\|_F = \|u\|_2\|v\|_2$, the operator-to-Frobenius inequality, and \Cref{lem:softmax-residual}.
\end{proof}

\subsection{Subspace Dynamics and Exponential Envelope}\label{app:weak-dynamics}

We now determine the instantaneous growth rate of the weak energy. We show that the time derivative of the energy is upper-bounded by the energy itself, setting the stage for exponential dynamics.

\begin{proposition}\label{thm:weak-growth}
For every fixed hidden subspace $\mathcal U$,
\[
\frac{d}{dt}M_{\mathcal U}(t) \le 4A\,M_{\mathcal U}(t).
\]
\end{proposition}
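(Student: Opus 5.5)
The plan is to differentiate each of the two terms of $M_{\mathcal U}(t)$ along the projected gradient flow and bound the result in terms of $a:=\|P_{\mathcal U}W_1\|_F$ and $b:=\|W_2P_{\mathcal U}\|_F$, so that $M_{\mathcal U}=a^2+b^2$. Write $P=P_{\mathcal U}$, and set $c_i:=\langle G_i,W_i\rangle_F/\|W_i\|_F^2$, so that $\dot W_i=-G_i+c_iW_i$.

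\textbf{Step 1: derivative of the first term.} Using $P=P^\top=P^2$,
\[
\tfrac{d}{dt}\|PW_1\|_F^2 = 2\langle PW_1, P\dot W_1\rangle_F = -2\langle PW_1, PG_1\rangle_F + 2c_1\|PW_1\|_F^2 .
\]
The first term is at most $2\|PW_1\|_F\|PG_1\|_F$ by Cauchy--Schwarz. By \Cref{lem:projected-gradient-U}, $\|PG_1\|_F\le A\|W_2P\|_F$, so this term is at most $2Aab$.

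\textbf{Step 2: the radial coefficient.} This is the only point needing care, because $c_1$ can be positive and so can increase mass. By Cauchy--Schwarz and \Cref{lem:norm-preservation-projected} (which gives $\|W_1\|_F=1$),
\[
|c_1|\le \|G_1\|_F/\|W_1\|_F=\|G_1\|_F .
\]
Applying \Cref{lem:projected-gradient-U} with $P=I$ gives $\|G_1\|_F\le A\|W_2\|_F=A$. Hence the radial term is at most $2Aa^2$.

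\textbf{Step 3: derivative of the second term.} Symmetrically, $\tfrac{d}{dt}\|W_2P\|_F^2\le 2Aab+2Ab^2$, using $\|G_2P\|_F\le A\|PW_1\|_F$ and $|c_2|\le\|G_2\|_F\le A\|W_1\|_F=A$.

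\textbf{Step 4: combine.} Summing the two derivatives,
\[
\tfrac{d}{dt}M_{\mathcal U}\le 2A(a^2+b^2)+4Aab .
\]
By AM--GM, $4ab\le 2(a^2+b^2)$, so the right-hand side is at most $4A(a^2+b^2)=4AM_{\mathcal U}$.

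Differentiability of $M_{\mathcal U}$ along the trajectory follows from the $C^2$ regularity in \Cref{lem:diff-under-exp}. The main obstacle is Step 2: the radial correction $c_iW_i$ is not itself a projected gradient term, and bounding it requires the unit-norm invariant. If the norms are not normalized to one, the constant picks up the factors $\|W_j\|_F/\|W_i\|_F$; the normalization is what yields exactly $4A$.
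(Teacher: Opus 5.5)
Your proof is correct and follows essentially the same route as the paper. Like the paper, you split $\dot W_i$ into the gradient part and the radial part, bound each cross term by $2A\,ab$ via \Cref{lem:projected-gradient-U}, bound the radial coefficients by $|c_i|\le\|G_i\|_F\le A$ using the unit-Frobenius-norm invariant, and close with $2ab\le a^2+b^2$.
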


\begin{proof}
We proceed by differentiating
\[
M_{\mathcal U}(t)=\|PW_1\|_F^2+\|W_2P\|_F^2.
\]
Under projected gradient flow,
\[
\dot W_1=-G_1+\alpha_1W_1,
\qquad
\dot W_2=-G_2+\alpha_2W_2,
\]
where
\[
\alpha_1:=\frac{\langle G_1,W_1\rangle_F}{\|W_1\|_F^2},
\qquad
\alpha_2:=\frac{\langle G_2,W_2\rangle_F}{\|W_2\|_F^2}.
\]
Since $\|W_i\|_F=1$, we have $|\alpha_i|\le \|G_i\|_F$. The same argument as in \Cref{lem:projected-gradient-U} gives $\|G_i\|_F\le A$, so $|\alpha_i|\le A$. Therefore,
\begin{align*}
\dot M_{\mathcal U}
&= 2\langle PW_1,P\dot W_1\rangle_F + 2\langle W_2P,\dot W_2P\rangle_F\\
&\le 2m_1\|PG_1\|_F + 2m_2\|G_2P\|_F + 2|\alpha_1|m_1^2 + 2|\alpha_2|m_2^2\\
&\le 2A m_1m_2 + 2A m_1m_2 + 2A m_1^2 + 2A m_2^2\\
&\le 4A(m_1^2+m_2^2)=4A M_{\mathcal U}.
\end{align*}
\end{proof}

\begin{corollary}\label{cor:mass-envelope}
For every fixed hidden subspace $\mathcal U$,
\[
M_{\mathcal U}(t) \le M_{\mathcal U}(0)e^{4At}.
\]
Equivalently,
\[
\sqrt{M_{\mathcal U}(t)} \le \sqrt{M_{\mathcal U}(0)}\,e^{2At}.
\]
\end{corollary}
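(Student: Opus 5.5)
The corollary follows from \Cref{thm:weak-growth} by a standard Grönwall argument, and I would carry it out in its integrating-factor form.

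The first step is to note that $t\mapsto M_{\mathcal U}(t)$ is $C^1$ along the projected gradient flow. By \Cref{lem:diff-under-exp}, $L$ is $C^2$, so the vector field defining the flow is $C^1$ on the region where $\|W_i\|_F\neq 0$. By \Cref{lem:norm-preservation-projected}, the trajectory stays on the product of unit spheres, so the solution exists and is $C^1$ for all $t\ge 0$. Since $M_{\mathcal U}$ is a polynomial in $W$, it is also $C^1$ in $t$.

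The second step is to define $g(t) := e^{-4At}M_{\mathcal U}(t)$ and differentiate. This gives
\[
g'(t) = e^{-4At}\big(\dot M_{\mathcal U}(t) - 4A\,M_{\mathcal U}(t)\big) \le 0,
\]
where the inequality is exactly \Cref{thm:weak-growth}. Hence $g$ is nonincreasing on $[0,\infty)$, so $g(t)\le g(0)=M_{\mathcal U}(0)$. Multiplying through by $e^{4At}$ gives the first inequality.

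The second form follows by taking square roots. Both sides are nonnegative, since $M_{\mathcal U}$ is a sum of squared norms, and $\sqrt{\cdot}$ is monotone on $[0,\infty)$. This yields $\sqrt{M_{\mathcal U}(t)}\le \sqrt{M_{\mathcal U}(0)}\,e^{2At}$.

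I do not expect a real obstacle here; the only points needing care are the well-posedness and differentiability of $M_{\mathcal U}$ along the flow, handled in the first step. The integrating-factor form is preferable to differentiating $\log M_{\mathcal U}$, because it avoids any issue when $M_{\mathcal U}(0)=0$: in that case the bound forces $M_{\mathcal U}\equiv 0$, which is consistent.
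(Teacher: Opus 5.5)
Your proof is correct and is essentially the paper's argument: the paper simply applies Gr\"onwall's inequality to \Cref{thm:weak-growth}, and your integrating-factor computation with $g(t)=e^{-4At}M_{\mathcal U}(t)$ is the standard proof of that inequality. Your remarks on well-posedness of the flow and on the case $M_{\mathcal U}(0)=0$ are sound, and they fill in details the paper leaves implicit.
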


\begin{proof}
Apply Gr\"onwall's inequality to \Cref{thm:weak-growth}.
\end{proof}

\main*

\begin{proof}
If the target is reached at time $T_\Delta$, then we have
\[
M_{\mathcal U_\Delta}(W(T_\Delta))\ge m.
\]
Combining with \Cref{cor:mass-envelope},
\[
m\leq M_{\mathcal U}(W(T_\Delta))
\le
M_{\mathcal U}(W(0))e^{4AT_\Delta}.
\]
Rearranging, we have 
\begin{align*}
 T &\geq \frac{1}{4A} \log\left(\frac{m}{M_{\mathcal U}(W(0))} \right) \\
 &\gtrsim \log\left(\frac{m}{n_1\sigma_{1,\mathcal U} + n_2\sigma_{2,\mathcal U}} \right),
\end{align*}
where the second inequality used \Cref{prop:initial-singular-mass}.

For the second case, we note that this corresponds to the setting where $\sigma_{1,\mathcal{U}}$ is the smallest singular value of $W_1$, and likewise for $\sigma_{2,\mathcal{U}}$ and $W_2$. Furthermore, we have $n_1=n_2=1$. We can then note that $\frac{1}{\sigma_{1,\mathcal{U}}+ \sigma_{2,\mathcal{U}}}\geq \frac12\min\{\frac{1}{\sigma_{1,\mathcal{U}}}, \frac{1}{\sigma_{2,\mathcal{U}}}\}$. Since both weight matrices have Frobenius norm 1, we further have $\frac{1}{\sigma_{1,\mathcal U}} \geq \kappa_1$, since the largest singular value of $W_1$ is at most 1. Combining these results, we obtain that
\[\frac{m}{\sigma_{1,\mathcal U} + \sigma_{2,\mathcal U}} \gtrsim m\cdot\min(\kappa_1,\kappa_2),\]
and we are complete.
\end{proof}

\section{Proofs for SingularClip}\label{sec:singularclip-proofs}

\begin{proposition}[Frobenius projection]\label{prop:frobenius-projection}
The matrix $\mathrm{sc}_a^b(W)$ is the Frobenius norm projection of $W$ onto the set of matrices with singular values in $[a, b]$. Equivalently, $\|\mathrm{sc}_a^b(W) - W\|_F \le \|B - W\|_F$ for any matrix $B$ with all singular values in $[a, b]$.
\end{proposition}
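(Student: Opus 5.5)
The argument rests on von Neumann's trace inequality. Write $W=U\Sigma V^\top$ with singular values $\sigma_1\ge\dots\ge\sigma_p$, $p=\min(m,n)$. For any $B$ with singular values $\beta_1\ge\dots\ge\beta_p$, all in $[a,b]$, expand
\[
\|B-W\|_F^2=\|B\|_F^2+\|W\|_F^2-2\langle B,W\rangle_F=\sum_i\beta_i^2+\sum_i\sigma_i^2-2\langle B,W\rangle_F .
\]
Von Neumann's inequality gives $\langle B,W\rangle_F\le\sum_i\beta_i\sigma_i$, with the singular values of both matrices sorted in the same (decreasing) order. Therefore
\[
\|B-W\|_F^2\ge\sum_{i=1}^p(\beta_i-\sigma_i)^2 .
\]
This reduces the matrix problem to a vector problem: minimize $\sum_i(\beta_i-\sigma_i)^2$ over vectors $\beta\in[a,b]^p$. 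Dropping the ordering constraint on $\beta$ only enlarges the feasible set, so the resulting lower bound remains valid.

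The vector problem is separable, and each coordinate is a one-dimensional projection onto an interval. Its minimizer is therefore $\beta_i=\mathrm{clip}(\sigma_i,a,b)$. Clipping is monotone, so this minimizer is automatically sorted in decreasing order, which makes the relaxation tight. Combining the two steps,
\[
\|B-W\|_F^2\ge\sum_i(\mathrm{clip}(\sigma_i,a,b)-\sigma_i)^2 .
\]
The matrix $\mathrm{sc}_a^b(W)=U\,\mathrm{clip}(\Sigma,a,b)V^\top$ attains this bound. It shares its singular vectors with $W$, so
\[
\mathrm{sc}_a^b(W)-W=U(\mathrm{clip}(\Sigma,a,b)-\Sigma)V^\top ,
\]
and the unitary invariance of the Frobenius norm gives exactly the right-hand side. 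Its singular values are the clipped values, which lie in $[a,b]$, so $\mathrm{sc}_a^b(W)$ is itself feasible. This establishes the inequality, and with it the claim that $\mathrm{sc}_a^b(W)$ is a (nearest-point) projection.

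The main point needing care is the convention for non-square matrices. There I would use the thin SVD and treat "all singular values" as the $p=\min(m,n)$ values, including any zeros, which are clipped up to $a$. This is consistent with the definition of $\mathrm{clip}(\Sigma,a,b)$ acting on the rectangular diagonal $\Sigma$. Von Neumann's inequality holds for rectangular matrices with this convention, so no further change is needed.

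A secondary remark concerns uniqueness. When $a>0$ the feasible set is not convex, so "projection" should be read as "a nearest point". If uniqueness were wanted, it would follow from the equality case of von Neumann's inequality, which requires simultaneous SVDs, but the stated inequality needs only existence of a minimizer, which the argument above already provides.
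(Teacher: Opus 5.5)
Your proof is correct and follows the same route as the paper: expand $\|B-W\|_F^2$, bound $\langle B,W\rangle_F$ with von Neumann's trace inequality, solve the separable interval-constrained scalar problems by clamping, use monotonicity of the clamp for the ordering, and check that $\mathrm{sc}_a^b(W)$ attains the bound (your explicit thin-SVD treatment of rectangular matrices is a welcome addition). The one quibble is your side remark that uniqueness would follow from the equality case of von Neumann's inequality: this fails when $W$ has zero singular values and $a>0$ (e.g.\ for square $W=0$ the equality condition is vacuous and every $aQ$ with $Q$ orthogonal is a nearest point), which is exactly the non-uniqueness the paper notes, though it does not affect the stated inequality.
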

\begin{proof}
We seek to minimize $\|B - W\|_F^2$ subject to the singular values of $B$ lying in $[a, b]$.

Expanding the squared norm: $\|B - W\|_F^2 = \|B\|_F^2 + \|W\|_F^2 - 2\langle B, W\rangle_F$.

Let $W = U\Sigma V^\top$ be the SVD of $W$ with singular values $\sigma_1 \ge \dots \ge \sigma_r$. Let $B$ have singular values $\lambda_1 \ge \dots \ge \lambda_r$. Note that $\|B\|_F^2 = \sum \lambda_i^2$.

To minimize the distance, we must maximize the inner product $\langle B, W\rangle_F$. By the Von Neumann trace inequality for real matrices, $\langle B, W\rangle_F \le \sum_{i=1}^r \lambda_i \sigma_i$, with equality strictly achieved when $B$ is constructed using a simultaneous SVD alignment with $W$ (i.e., setting $B = U \operatorname{diag}(\lambda) V^\top$).

Thus, the optimization problem separates into scalar problems:
\[ \min_{\lambda} \sum_{i=1}^r (\lambda_i^2 + \sigma_i^2 - 2\lambda_i \sigma_i) = \min_{\lambda} \sum_{i=1}^r (\lambda_i - \sigma_i)^2, \]
subject to $\lambda_i \in [a, b]$.

Since the constraints are decoupled, we minimize each quadratic term $f(\lambda_i) = (\lambda_i - \sigma_i)^2$ independently subject to the bounds $\lambda_i \in [a, b]$. Because $f(\lambda_i)$ is a strictly convex parabola centered at $\sigma_i$, the global minimum on the restricted interval depends on the location of the vertex relative to the boundaries. If the vertex lies within the interval ($\sigma_i \in [a, b]$), the unconstrained minimum is achievable, natively yielding $\lambda_i^* = \sigma_i$. If the vertex lies strictly below the interval ($\sigma_i < a$), the function is strictly increasing on $[a, b]$, placing the minimal distance geometrically at the lower boundary $\lambda_i^* = a$. Conversely, if the vertex lies strictly above the interval ($\sigma_i > b$), the function is strictly decreasing on $[a, b]$, placing the minimum at the upper boundary $\lambda_i^* = b$. This piecewise optimality condition is evaluated by the continuous clamping function $\lambda_i^* = \min(\max(\sigma_i, a), b)$.

Because the function $z \mapsto \min(\max(z, a), b)$ is non-decreasing and the original $\sigma_i$ are ordered, the resulting $\lambda_i^*$ satisfy $\lambda_1^* \ge \dots \ge \lambda_r^*$. This confirms that the constructed matrix $B^*$ satisfies the SVD ordering requirements.

Because the constraint set is non-convex (due to the lower bound $a > 0$), the projection is not guaranteed to be globally unique when $W$ possesses degenerate or zero singular values, as the choice of orthonormal matrices $U$ and $V$ spanning those specific eigenspaces is not unique. However, the theoretical maximum inner product achieved via the Von Neumann trace inequality remains invariantly $\sum_{i=1}^r \lambda_i \sigma_i$ regardless of the chosen orthogonal basis within those subspaces. Thus, any valid SVD of $W$ provides an optimal rotational alignment.

Consequently, the constructed optimal matrix $B^* = U \operatorname{diag}(\lambda^*) V^\top$ corresponds exactly to the definition of the SingularClip operator $\mathrm{sc}_a^b(W)$. It adheres to the target constraint set and achieves the minimal Frobenius distance, confirming $\mathrm{sc}_a^b(W)$ as a valid optimal projection.

Finally, to establish the vector bound claimed in the main text, consider any input vector $x$. By the standard property relating the spectral norm (operator norm) to the Frobenius norm, we have:
\[ \|Wx - \mathrm{sc}_a^b(W)x\|_2 = \|(W - \mathrm{sc}_a^b(W))x\|_2 \le \|W - \mathrm{sc}_a^b(W)\|_{op} \|x\|_2 \le \|W - \mathrm{sc}_a^b(W)\|_F \|x\|_2. \]
This completes the proof.
\end{proof}